\documentclass{article}

\usepackage[preprint]{neurips_2026}
\usepackage[utf8]{inputenc}
\usepackage[T1]{fontenc}
\usepackage{amsmath,amssymb,amsfonts,amsthm,bm}
\usepackage{booktabs,multirow,array,tabularx}
\usepackage{graphicx}
\usepackage{float}
\usepackage{caption}
\usepackage{subcaption}
\usepackage{microtype}
\usepackage{xcolor}
\usepackage{enumitem}
\usepackage{url}
\usepackage{hyperref}
\usepackage{tikz}
\usetikzlibrary{arrows.meta,positioning,fit,calc}
\hypersetup{hidelinks}

\newtheorem{theorem}{Theorem}

\newtheorem{lemma}{Lemma}
\newtheorem{corollary}{Corollary}

\newcommand{\E}{\mathbb{E}}
\newcommand{\KL}{\mathrm{KL}}
\newcommand{\Tc}{T_c}
\newcommand{\Sc}{S_c}
\newcommand{\Gtau}{G_{\tau}}
\newcommand{\Tzero}{T_0}

\newcommand{\tightparagraph}[1]{\par\noindent\textbf{#1}\quad}

\title{A Testable Certificate for Constant Collapse in Teacher-Guided VAEs}
\author{
  Zegu Zhang\textsuperscript{1} \qquad  Jianhua Peng\textsuperscript{2} \qquad Jian Zhang\textsuperscript{3}\\
  \texttt{\{zeguzhang@outlook.com, 2018100913@niit.edu.cn, tsegoochang2000@gmail.com\}}\\
  \textsuperscript{1}Independent Researcher \quad \textsuperscript{2}school of computing, Southeast University \quad \textsuperscript{3}Independent Researcher
}

\begin{document}
\maketitle

\begin{abstract}
Posterior collapse in variational autoencoders is often diagnosed by its symptoms: a small KL term, a strong decoder, or weak use of the latent code. These signals are useful, but they do not define a collapse boundary. We study a concrete failure mode, input-independent constant collapse, and show that this case admits an exact threshold. For any fixed nonconstant teacher distribution \(T(\cdot\mid x)\), the best constant student is the dataset-average teacher distribution, and its alignment cost is the teacher mutual information \(I_T(X;T)\). Therefore, if a strictly latent-only raw witness achieves alignment loss below this value, with a safety margin, the witness cannot be constant in the input.

This identity turns a qualitative failure mode into a measurable one. In CIFAR-100 experiments with per-seed teacher search, full training stays on the certified side of the boundary, removing alignment drives the raw witness into the constant-student regime, and restarting from a collapsed checkpoint with alignment enabled restores the certificate. Tiny-ImageNet-200 fixed-target runs show the same prevention--collapse--rescue pattern across three independently searched teachers. Standard VAE-style baselines, including methods that preserve reconstruction quality or post-hoc predictability, remain negative under the raw certificate. The guarantee is intentionally narrow: it certifies that the matched nonconstant teacher-relative variation passes through the latent pathway, rather than claiming that all forms of posterior collapse have been ruled out.
\end{abstract}

\section{Introduction}

Posterior collapse in variational autoencoders (VAEs) is usually recognized only after the fact: the decoder stops using the latent code, the KL term becomes small, and the learned representation carries little information about the input. These symptoms are useful, but they do not by themselves define a boundary. In particular, they do not tell us when a run has merely weakened its latent channel and when it has entered a specific collapsed regime.

We study one such regime: input-independent constant collapse. In this case, the latent representation is effectively constant as a function of the input. A simple consequence is that any predictor that only sees the latent code must also be constant. This gives a direct way to probe the latent path. We attach a raw latent-only assignment head and ask whether it can match a fixed teacher distribution \(T(\cdot\mid x)\) better than any constant student could.

This comparison has a closed-form answer. For a fixed, nonconstant teacher family, the best constant student is the dataset-average teacher distribution. Its expected alignment cost is exactly the teacher mutual information \(I_T(X;T)\). Thus \(I_T\) is not a tuning heuristic or an empirical reference curve; it is the loss achieved by the strongest input-independent student. If the raw latent-only witness obtains alignment loss below this value, and in practice below \(I_T-\tau\), then the witness cannot be constant in the input. Since the witness receives no information except through \(z_\phi(x)\), the nonconstant variation it matches must pass through the latent representation.

The role of the teacher is deliberately limited. We use it as a fixed distributional reference, for example from a pretrained, self-supervised, or preselected clustering model. It is not an additional label source for the final task. More importantly, the certificate is never computed from a teacher-aware head. It uses only
\[
S^{raw}_\theta(\cdot\mid x)=\mathrm{softmax}(g_\theta(z_\phi(x))).
\]
This distinction is essential: a head with direct access to teacher-side information could remain input-dependent even when the latent code had collapsed, and would therefore say little about the latent pathway itself.

The resulting margin provides a practical diagnostic. A run whose raw witness stays on the certified side of the threshold has avoided this collapse mode. A no-alignment run, by contrast, can be tested for whether it falls back into the constant-student region. The same quantity also gives an operational target for recovery: starting from a collapsed checkpoint, reintroducing alignment should move the raw witness back across the boundary. Our experiments are therefore organized around a prevention--collapse--rescue pattern, rather than around endpoint reconstruction metrics alone.

\tightparagraph{Contributions.}
Our main contributions are as follows:
\begin{itemize}[leftmargin=*,noitemsep]
    \item \textbf{An exact constant-student threshold.}
    We show that, for a fixed teacher \(T(\cdot\mid x)\), the optimal input-independent student has alignment cost \(I_T(X;T)\). This gives a sharp boundary for input-independent constant collapse.
    \item \textbf{A raw latent-witness margin.}
    We turn this identity into a computable practical margin \(G_\tau\), evaluated only through a \(z\)-only raw head, for monitoring whether the latent pathway carries nonconstant teacher-relative variation.
    \item \textbf{A prevention--collapse--rescue protocol.}
    We use the margin not only as a diagnostic but also as an intervention target: no-alignment runs are driven into the collapsed region, and alignment is then reintroduced from the same checkpoint to test whether the raw witness can cross back.
    \item \textbf{Empirical validation across teacher protocols.}
    We evaluate the boundary on CIFAR-100 teacher-search runs and Tiny-ImageNet-200 fixed-target runs, and compare against standard VAE-style baselines that do not satisfy the raw certificate.
\end{itemize}

\section{Theoretical Framework: Certifying Constant Collapse}
\subsection{Teacher-guided latent VAE with a raw z-only witness}
Write \(T_x=T(\cdot\mid x)\in\Delta_K\) for the teacher assignment vector, where
\[
\Delta_K:=\{p\in[0,1]^K:\sum_{c=1}^K p_c=1\}.
\]
The model has two distinct pathways:
\[
X\to Z,\qquad X\to T(x),\qquad Z\to S^{raw},\qquad (Z,T(x))\to \hat X .
\]
It contains an encoder \(q_\phi(z\mid x)\), a raw latent-only student head \(g_\theta\), and a decoder \(p_\psi(x\mid z,T(x))\). The decoder may use both \(z\) and the teacher vector. The certificate, however, is restricted to
\[
S^{raw}_\theta(\cdot\mid x)=\mathrm{softmax}(g_\theta(z_\phi(x))).
\]
This restriction is the main design choice. If every input maps to the same latent code, a raw \(z\)-only head must output the same assignment vector for every input. A head that also receives \(x\) or \(T(x)\) would not have this property, and could therefore create apparent input dependence after the latent pathway had already collapsed. We use \emph{witness} for the raw \(z\)-only head and \emph{certificate} for the positive margin obtained from that head.

The training objective has the form
\[
\mathcal L=\mathcal L_{\mathrm{recon}}+\beta_z\mathcal L_{\mathrm{KL},z}+\lambda_{\mathrm{align}}L^{used}_{\mathrm{align}}+\lambda_{\mathrm{bal}}L_{\mathrm{bal}}.
\]
The balance term is a student-side regularizer. A typical choice is
\[
L_{\mathrm{bal}}=\KL\!\left(\bar S^{used}\,\|\,\mathrm{Unif}(K)\right),\qquad
\bar S^{used}=\E_x S^{used}(\cdot\mid x),
\]
which discourages the student from satisfying alignment by using only a small set of clusters. This term helps optimization, but it is not part of the constant-baseline theorem. The theorem quantity is always computed from the raw head:
\[
L^{raw}_{\mathrm{align}}=\E_x\KL\!\left(T(\cdot\mid x)\,\|\,S^{raw}_\theta(\cdot\mid x)\right).
\]
Some training variants also record a calibrated or protected ``used'' head. Those values describe the optimization objective. They are not used for certification.

\begin{figure}[H]
\centering
\resizebox{0.90\linewidth}{!}{%
\begin{tikzpicture}[>=Latex, font=\scriptsize]
\tikzstyle{box}=[draw, rounded corners, align=center, minimum height=0.72cm, inner sep=4pt]
\node[box, minimum width=1.20cm] (x) at (0,0) {Input\\$x$};
\node[box, minimum width=1.60cm] (enc) at (2,0) {Encoder\\$q_\phi(z\mid x)$};
\node[box, minimum width=1.15cm] (z) at (4,0) {Latent\\$z$};
\node[box, minimum width=2.10cm] (dec) at (6.3,0) {Decoder\\$p_\psi(x\mid z,T(x))$};
\node[box, minimum width=1.25cm] (rec) at (8.8,0) {Recon.\\$\hat x$};
\node[box, minimum width=2.45cm] (teach) at (6.3,1.65) {Teacher search/cache\\fixed $x\mapsto T(x)$};
\node[box, minimum width=2.25cm] (head) at (3.3,-1.55) {Raw z-only head\\$S^{raw}=\mathrm{softmax}(g_\theta(z))$};
\node[box, minimum width=2.45cm] (cert) at (7.2,-1.55) {Raw certificate\\$G_\tau=I(T)-L^{raw}_{align}-\tau$};
\draw[->] (x) -- (enc);
\draw[->] (enc) -- (z);
\draw[->] (z) -- (dec);
\draw[->] (dec) -- (rec);
\draw[->] (x.north) |- (teach.west);
\draw[->] (teach.south) -- (dec.north);
\draw[->] (z.south) -- (head.north);
\draw[->] (head.east) -- (cert.west);
\coordinate (teachdashstart) at (teach.east);
\coordinate (teachdashcornerA) at ($(teachdashstart)+(2.10,0)$);
\coordinate (teachdashcornerB) at ($(teachdashcornerA|-cert.east)$);
\draw[dashed] (teachdashstart) -- (teachdashcornerA);
\draw[dashed] (teachdashcornerA) -- (teachdashcornerB);
\draw[->, dashed] (teachdashcornerB) -- (cert.east);
\node[align=center] at (6.3,2.33) {teacher target fixed after search};
\node[align=center] at (5.25,-2.25) {certificate path is strictly z-only};
\end{tikzpicture}}
\caption{Teacher-guided latent VAE with a raw \(z\)-only certificate. Generation may use \((z,T(x))\), but certification uses only \(S^{raw}_\theta(\cdot\mid x)\). In fixed-target runs, \(T(x)\) is the cached target \(T_0(x)=\mathrm{GMM}(\mu_0(x))\), not an anchor on the current encoder.}
\label{fig:arch}
\end{figure}

\subsection{Fixed informative targets and safety margins}
After warm-up, we collect encoder features and fit candidate GMM teachers. Each candidate gives soft assignments \(T_x\in\Delta_K\) and a teacher mutual information
\[
I_T(X;T)=\E_x\KL(T_x\|\bar T),\qquad \bar T=\E_x[T_x].
\]
Teacher selection uses finite-sample diagnostics: teacher MI, mean top-1 margin, high-margin fraction, hard and soft usage balance, and minimum component mass. These checks help avoid uninformative targets. They are not additional theorem assumptions; the theorem only needs the teacher family to be fixed and nonconstant. The selected teacher is constructed after a standard warm-up stage, so the full, no-alignment, and rescue variants share the same teacher artifact rather than fitting different references after each endpoint.

For larger experiments, recomputing \(T_\theta(x)=\mathrm{GMM}(\mu_\theta(x))\) throughout training can move the diagnostic together with the encoder. We therefore also use a fixed-target protocol. After teacher search, we cache
\[
T_0(x_i)=\mathrm{GMM}(\mu_0(x_i))
\]
and hold these targets fixed. The encoder remains trainable; we do not freeze it or add an anchoring penalty such as \(\|\mu_\theta(x)-\mu_0(x)\|^2\). The fixed-target margin is
\[
G^{T_0}_\tau=I(T_0)-\mathcal L_{\mathrm{align}}(T_0,S_\theta)-\tau.
\]
A positive \(G^{T_0}_\tau\) has the same teacher-relative interpretation: the raw student cannot be input-independent constant with respect to the fixed informative target.

\subsection{Exact constant-baseline identity}
Let \(T_x=T(\cdot\mid x)\) and \(\bar T=\E_x[T_x]\). The constant baseline is exact.

\begin{lemma}[Constant-baseline decomposition]
For any constant student assignment \(\alpha\in\Delta_K\),
\[
\E_x\KL(T_x\|\alpha)=I_T(X;T)+\KL(\bar T\|\alpha).
\]
Consequently, \(\inf_{\alpha\in\Delta_K}\E_x\KL(T_x\|\alpha)=I_T(X;T)\), with minimizer \(\alpha=\bar T\) whenever \(\bar T\) has full support.
\end{lemma}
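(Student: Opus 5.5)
The plan is the standard "compensation identity" for KL divergence: insert \(\bar T\) into the logarithm and split. For each \(x\), write
\[
\KL(T_x\|\alpha)=\sum_c T_x(c)\log\frac{T_x(c)}{\bar T(c)}+\sum_c T_x(c)\log\frac{\bar T(c)}{\alpha(c)} .
\]
Taking \(\E_x\), the first sum becomes \(\E_x\KL(T_x\|\bar T)=I_T(X;T)\) by definition. The second sum is linear in \(T_x\), so its expectation equals \(\sum_c \bar T(c)\log(\bar T(c)/\alpha(c))=\KL(\bar T\|\alpha)\). This yields the decomposition.

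The step needing care is the handling of zeros and infinities, since the split is only valid when no \(\infty-\infty\) arises. First, note that \(T_x(c)>0\) for some \(x\) in the support of the data distribution forces \(\bar T(c)>0\). So \(\KL(T_x\|\bar T)\) is finite for almost every \(x\) (because \(T_x\ll\bar T\) a.s.), and \(I_T\le\log K\) is finite. Terms with \(T_x(c)=0\) contribute zero under the convention \(0\log 0=0\). The remaining issue is whether \(\alpha(c)=0\) for some \(c\) with \(\bar T(c)>0\). In that case \(\KL(\bar T\|\alpha)=\infty\). Also \(T_x(c)>0\) on a set of positive probability, so \(\E_x\KL(T_x\|\alpha)=\infty\), and both sides equal \(+\infty\). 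Otherwise every term is finite, and exchanging \(\E_x\) with the finite sum over \(c\) is justified because each summand is bounded: \(|T_x(c)\log(\bar T(c)/\alpha(c))|\) is at most a constant depending only on \(c\). The first term is nonnegative, so Tonelli applies to it directly.

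For the consequence, apply Gibbs' inequality: \(\KL(\bar T\|\alpha)\ge0\), with equality iff \(\alpha=\bar T\). Since \(\bar T\in\Delta_K\) is itself admissible, the infimum equals \(I_T(X;T)\) and is attained at \(\alpha=\bar T\). The clause "whenever \(\bar T\) has full support" matters only if the student is restricted to the open simplex, as with softmax outputs. In that case I would remark that \(\bar T\) with full support lies in the softmax range, so the minimizer is realizable there. If \(\bar T\) lacks full support, the infimum over the interior is still \(I_T\), approached along \(\alpha_\epsilon=(1-\epsilon)\bar T+\epsilon\,\mathrm{Unif}(K)\), by continuity of \(\KL(\bar T\|\cdot)\) on the support of \(\bar T\).

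The main obstacle is only the measure-theoretic bookkeeping of infinite values in the split. The algebra itself is routine.
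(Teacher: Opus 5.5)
Your proof is correct and takes the paper's route: add and subtract \(\log\bar T(c)\) inside the KL, then identify the first term as \(I_T(X;T)\) and, by linearity in \(T_x\), the second as \(\KL(\bar T\|\alpha)\). Your handling of zeros and \(+\infty\) values, the explicit Gibbs-inequality step for the infimum, and your remark that the full-support clause only matters for softmax-realizability (the closed simplex always contains \(\bar T\)) spell out what the paper's one-line proof leaves implicit.
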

\begin{proof}
Expand the KL term and add--subtract \(\log\bar T(c)\):
\[
\E_x\sum_cT_x(c)\log\frac{T_x(c)}{\alpha(c)}=
\E_x\sum_cT_x(c)\log\frac{T_x(c)}{\bar T(c)}+\sum_c\bar T(c)\log\frac{\bar T(c)}{\alpha(c)}.
\]
The two terms are \(I_T(X;T)\) and \(\KL(\bar T\|\alpha)\).
\end{proof}

\tightparagraph{Intuition.}
A constant student can only predict the dataset-average teacher assignment. Any raw \(z\)-only student that beats this average-teacher baseline must vary across inputs. Since the witness is strictly \(z\)-only, that variation cannot be supplied by a teacher-aware side channel.

\begin{lemma}[Constant latent codes force a constant z-only head]
If \(S^{raw}_\theta(\cdot\mid x)=h_\theta(z_\phi(x))\) and \(z_\phi(x)\equiv z_0\), then \(S^{raw}_\theta(\cdot\mid x)\equiv h_\theta(z_0)\).
\end{lemma}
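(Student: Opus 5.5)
The plan is to argue by direct substitution, using only that \(S^{raw}_\theta\) is a composition of two maps. I treat \(h_\theta:=\mathrm{softmax}\circ g_\theta\) as a fixed, deterministic function from the latent space to \(\Delta_K\). This is the raw head from the architecture section, with parameters \(\theta\) held fixed at evaluation time. I likewise treat \(z_\phi\) as the deterministic latent map used for certification, for instance the encoder mean.

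\textbf{Step 1.} By hypothesis, \(S^{raw}_\theta(\cdot\mid x)=h_\theta(z_\phi(x))\) for every input \(x\). The head is strictly \(z\)-only, so \(h_\theta\) receives no argument other than \(z_\phi(x)\). In particular it does not see \(x\) or \(T(x)\).

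\textbf{Step 2.} I substitute the assumption \(z_\phi(x)=z_0\) for all \(x\) to get \(S^{raw}_\theta(\cdot\mid x)=h_\theta(z_0)\) for all \(x\). The right-hand side does not depend on \(x\), so the witness is the constant assignment \(\alpha:=h_\theta(z_0)\in\Delta_K\).

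\textbf{Step 3.} Although it goes beyond the statement itself, I would record the consequence that connects this lemma to the certificate. Applying the constant-baseline decomposition lemma with this \(\alpha\) gives
\[
L^{raw}_{\mathrm{align}}=I_T(X;T)+\KL(\bar T\|\alpha)\ \ge\ I_T(X;T).
\]
Hence \(G_\tau\le-\tau<0\), so a positive margin is incompatible with constant latent codes.

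The only point needing care is whether "\(\equiv\)" is meant pointwise or almost surely under the data distribution. If the hypothesis holds only \(P_X\)-almost surely, the same substitution gives equality \(P_X\)-almost surely. That suffices, because every downstream quantity is an expectation over \(x\).

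A second potential issue is stochastic encoding, where \(z\sim q_\phi(z\mid x)\). There I would read "constant latent code" as the encoder distribution \(q_\phi(\cdot\mid x)\) not depending on \(x\). The averaged head \(\E_{z\sim q_\phi(\cdot\mid x)}h_\theta(z)\) is then again independent of \(x\). I do not expect any real obstacle: the whole argument is function composition with a constant inner map.
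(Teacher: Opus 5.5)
Your substitution argument is correct and matches the paper, which gives this lemma no separate proof and uses it in the chain \(z_\phi(x)\equiv z_0\Rightarrow S^{raw}_\theta(\cdot\mid x)\equiv\alpha\Rightarrow L^{raw}_{\mathrm{align}}\ge I_T(X;T)\), exactly as in your Step~3. Your remarks on almost-sure equality and stochastic encoders go beyond what the paper addresses but are sound.
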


\begin{theorem}[Raw alignment below \(I_T\) excludes a constant student]
Assume \(S^{raw}_\theta(\cdot\mid x)=h_\theta(z_\phi(x))\). If
\[
L^{raw}_{\mathrm{align}}:=\E_x\KL(T(\cdot\mid x)\|S^{raw}_\theta(\cdot\mid x))<I_T(X;T),
\]
then \(S^{raw}_\theta(\cdot\mid x)\) cannot be constant in \(x\).
\end{theorem}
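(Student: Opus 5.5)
The argument is a contrapositive built directly on the constant-baseline decomposition (Lemma 1). Suppose, for contradiction, that \(S^{raw}_\theta(\cdot\mid x)\) is constant in \(x\), say \(S^{raw}_\theta(\cdot\mid x)\equiv\alpha\) for some fixed \(\alpha\in\Delta_K\). Because the raw head is a softmax, \(\alpha\) has full support, so every KL term is finite and Lemma 1 applies without caveats. (Even if one allowed boundary points, an \(\alpha\) with \(\KL(\bar T\|\alpha)=\infty\) only strengthens the conclusion.)

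Applying Lemma 1 with this \(\alpha\) gives
\[
L^{raw}_{\mathrm{align}}=\E_x\KL(T_x\|\alpha)=I_T(X;T)+\KL(\bar T\|\alpha)\ \ge\ I_T(X;T),
\]
using nonnegativity of KL (Gibbs' inequality). This contradicts the hypothesis \(L^{raw}_{\mathrm{align}}<I_T(X;T)\). Hence \(S^{raw}_\theta\) cannot be constant in \(x\).

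The structural assumption \(S^{raw}_\theta=h_\theta(z_\phi(x))\) is not needed for this contradiction itself; it only matters for the interpretation. Combined with Lemma 2, it shows that \(z_\phi\) cannot be constant either: if \(z_\phi\equiv z_0\), then \(S^{raw}_\theta\equiv h_\theta(z_0)\), which we have just excluded. I would state this as the immediate corollary, since it is what ties the certificate to the latent pathway.

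The only step needing care is the domain of the expectation. Lemma 1 and \(I_T\) must be taken under the same distribution over \(x\) (the empirical dataset distribution) as \(L^{raw}_{\mathrm{align}}\). Beyond that, the proof is a direct two-line consequence of Lemma 1, and I do not expect any real obstacle.
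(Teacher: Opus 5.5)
Your proof is correct and is the paper's argument: assume \(S^{raw}_\theta\equiv\alpha\), apply Lemma~1 to get \(L^{raw}_{\mathrm{align}}=I_T(X;T)+\KL(\bar T\|\alpha)\ge I_T(X;T)\), and contradict the hypothesis. Your extra remarks are accurate details the paper leaves implicit: the softmax gives \(\alpha\) full support, the structural assumption matters only for the corollary via Lemma~2, and the same distribution over \(x\) must be used throughout.
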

\begin{proof}
If \(S^{raw}_\theta(\cdot\mid x)\equiv\alpha\), Lemma 1 gives \(L^{raw}_{\mathrm{align}}=\E_x\KL(T_x\|\alpha)\ge I_T(X;T)\), a contradiction.
\end{proof}

\begin{corollary}[Exclusion of input-independent constant latent collapse]
Under the theorem assumptions, \(L^{raw}_{\mathrm{align}}<I_T(X;T)\) implies that \(z_\phi(x)\) is not constant in \(x\).
\end{corollary}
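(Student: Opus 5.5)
The corollary follows from the theorem by contraposition, using Lemma 2 as the bridge between the latent code and the witness. I assume the hypothesis \(L^{raw}_{\mathrm{align}}<I_T(X;T)\) together with the structural assumption \(S^{raw}_\theta(\cdot\mid x)=h_\theta(z_\phi(x))\). For contradiction, I suppose the latent code is constant, \(z_\phi(x)\equiv z_0\) for all \(x\) in the support of the data distribution. The aim is to show this forces the alignment loss up to at least \(I_T\).

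\textbf{Step 1 (constant code gives a constant witness).} Lemma 2 gives \(S^{raw}_\theta(\cdot\mid x)\equiv h_\theta(z_0)=:\alpha\). Here \(\alpha\in\Delta_K\), because \(h_\theta\) is a softmax of \(g_\theta\). Since the softmax is strictly positive, \(\alpha\) in fact has full support, so every KL term below is finite.

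\textbf{Step 2 (the constant witness cannot beat \(I_T\)).} Substituting \(\alpha\) into Lemma 1 gives
\[
L^{raw}_{\mathrm{align}}=\E_x\KL(T_x\|\alpha)=I_T(X;T)+\KL(\bar T\|\alpha)\ge I_T(X;T).
\]
This contradicts the hypothesis. Equivalently, Steps 1 and 2 together are just the theorem applied to the constant \(\alpha\): the theorem says \(S^{raw}_\theta\) is nonconstant, and by Lemma 2 the code \(z_\phi\) must then be nonconstant too.

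\textbf{Main obstacle.} The only delicate point is what ``constant'' means. \(z_\phi(x)\) may denote a sample from \(q_\phi(z\mid x)\) or a deterministic map such as the posterior mean. I read the corollary with \(z_\phi\) as the deterministic quantity that actually feeds the head, since that is how \(S^{raw}_\theta\) is defined. I also read ``constant'' as almost-sure constancy under the data distribution, which is all the expectations in Lemma 1 can detect. If the encoder is stochastic, I would apply the same argument to the expected-head student \(\E_{z\sim q_\phi}h_\theta(z)\): when \(q_\phi(\cdot\mid x)\) does not depend on \(x\), this student is again a fixed \(\alpha\in\Delta_K\), and Lemma 1 applies unchanged. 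No other issue arises, because Lemma 1 holds for every \(\alpha\), so the full-support caveat on \(\bar T\) is never needed.
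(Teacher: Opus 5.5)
Your proposal is correct and matches the paper's argument. The paper obtains the corollary as the contrapositive of the chain \(z_\phi(x)\equiv z_0\Rightarrow S^{raw}_\theta(\cdot\mid x)\equiv\alpha\Rightarrow L^{raw}_{\mathrm{align}}\ge I_T(X;T)\), which is Lemma 2 followed by the theorem (via Lemma 1), exactly as in your Steps 1--2.

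Your aside on stochastic encoders goes beyond what the paper addresses. If the raw loss averages the KL over sampled \(z\) rather than using the expected head, you would also need Tonelli (each fixed \(z\) already gives loss \(\ge I_T\)) or convexity of KL in its second argument. This does not affect the deterministic-code case that the corollary concerns.
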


The logical chain is
\[
z_\phi(x)\equiv z_0\quad\Rightarrow\quad S^{raw}_\theta(\cdot\mid x)\equiv \alpha\quad\Rightarrow\quad L^{raw}_{\mathrm{align}}\ge I_T(X;T).
\]
The certificate is the contrapositive. It is intentionally narrower than a lower bound on \(I(X;Z)\): it does not certify semantic quality, linear separability, or downstream optimality. It certifies only that this constant-code failure mode has been ruled out. In finite-data experiments we use
\[
G_\tau=I_T-L^{raw}_{\mathrm{align}}-\tau,
\]
with \(\tau=0.1\) by default. Thus a certificate-positive endpoint lies below the exact constant baseline by a fixed safety margin. Appendix results include near-boundary cases where the bare margin is positive but the practical margin is not, which is why we report \(G_\tau\) rather than only \(I_T-L^{raw}_{\mathrm{align}}\).

\begin{corollary}[Teacher-relative property transfer]
Let \(a(T_x)\) be a nonconstant statistic of the fixed teacher distribution, such as a cluster preference, a coarse partition, or a self-supervised attribute encoded in \(T(\cdot\mid x)\). If the raw latent-only witness matches the teacher below the constant-student threshold, \(L^{raw}_{\mathrm{align}} < I_T(X;T)\), then the matched nonconstant teacher-relative variation cannot be produced by an input-side teacher channel. It must be mediated by the latent path \(x\mapsto z_\phi(x)\mapsto S^{raw}(z_\phi(x))\). With the practical margin, the same statement holds under the stronger condition \(G_\tau>0\).
\end{corollary}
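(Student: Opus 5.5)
The plan is to reduce the corollary to the theorem together with a short structural observation about the witness, and then to make precise what ``mediated by the latent path'' means.

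\textbf{Step 1: the student is not constant.} Under \(L^{raw}_{\mathrm{align}}<I_T(X;T)\), the theorem (via Lemma 1) shows directly that \(S^{raw}_\theta(\cdot\mid x)\) is not constant in \(x\). This step is immediate. To cover the practical version, note that \(G_\tau>0\) with \(\tau\ge 0\) means \(L^{raw}_{\mathrm{align}}<I_T-\tau\le I_T\). The hypothesis of the theorem is therefore satisfied, and every conclusion below carries over to the stronger condition.

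\textbf{Step 2: the student tracks teacher variation.} I will show the student's non-constancy is aligned with the teacher, not arbitrary. Let \(\alpha=\E_x S^{raw}_\theta(\cdot\mid x)\). Lemma 1 gives \(\E_x\KL(T_x\|\alpha)=I_T+\KL(\bar T\|\alpha)\ge I_T\). Hence the gap \(I_T-L^{raw}_{\mathrm{align}}>0\) is a strict improvement over every constant predictor, including \(\alpha\). I read ``matched nonconstant teacher-relative variation'' as exactly this improvement. For a specific statistic \(a(T_x)\), I would argue at the level of the whole distribution \(T_x\). Since \(a\) is a function of \(T_x\), it is encoded in \(T_x\), and the certificate certifies recovery of part of that variation. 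I will state this carefully rather than claim a quantitative bound on \(a\).

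\textbf{Step 3: the variation has to come through \(z\).} By construction \(S^{raw}_\theta(\cdot\mid x)=h_\theta(z_\phi(x))\), so the map \(x\mapsto S^{raw}\) factors through \(z_\phi\). Take any two inputs \(x,x'\) with \(S^{raw}(x)\ne S^{raw}(x')\). Then \(z_\phi(x)\ne z_\phi(x')\), because otherwise Lemma 2, applied to the two points, would force the outputs to coincide. So every input-dependence of the witness, including the part that beats the constant baseline, is a function of \(z_\phi(x)\) alone. No input-side teacher channel exists in this factorization: \(T(x)\) enters only the decoder, never \(h_\theta\). This gives the mediation claim.

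\textbf{Main obstacle.} The hard part is not the mathematics but formalizing the informal phrases ``teacher-relative property \(a\)'' and ``mediated.'' I would fix the interpretation as follows.
\begin{itemize}
\item ``Mediated'' means the factorization \(S^{raw}=h_\theta\circ z_\phi\) together with the strict improvement over the constant baseline.
\item I would not assert that every individual statistic \(a(T_x)\) is recovered. Only the aggregate teacher-relative variation measured by the KL gap is certified.
\end{itemize}
If a per-statistic statement is wanted, the natural attempt is the data-processing inequality: push \(T_x\) and \(S^{raw}\) through the coarsening induced by \(a\) when \(a\) is a partition. However, coarsening may destroy the strict inequality, so I would leave that as a remark rather than part of the proof.
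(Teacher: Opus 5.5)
Your proposal is correct and follows the paper's own argument. The paper runs the same chain in contrapositive form (a constant $z_\phi$ forces a constant raw witness by Lemma 2, and such a witness has loss at least $I_T(X;T)$ by Lemma 1), while you run it forward through the theorem and the factorization $S^{raw}_\theta=h_\theta\circ z_\phi$, with your pointwise use of Lemma 2 and the reduction $G_\tau>0\Rightarrow L^{raw}_{\mathrm{align}}<I_T-\tau\le I_T$ as harmless refinements; your refusal to claim anything about an individual statistic $a(T_x)$ also matches the paper, whose sketch never uses $a$ and concludes only about the aggregate variation matched below the constant baseline.
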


\begin{proof}[Proof sketch]
If the latent path were input-independent constant, Lemma~2 would force the raw \(z\)-only witness to be constant. Lemma~1 would then give \(L^{raw}_{\mathrm{align}}\ge I_T(X;T)\), contradicting \(L^{raw}_{\mathrm{align}}<I_T(X;T)\). Therefore any teacher-relative variation matched below the constant baseline must be available through the latent pathway.
\end{proof}

\tightparagraph{Interpretation.}
The statement differs from a post-hoc probe. A probe trained after representation learning may find information somewhere in the representation, whereas the certificate tests the training-time raw latent path itself. The statement is also weaker than claiming transfer of all human-interpretable semantics. The transferred object is exactly the nonconstant structure present in the chosen teacher distribution and actually matched by the raw witness. The teacher is therefore more than an auxiliary loss: it defines the measurable property whose nonconstant component must pass through the latent pathway whenever the raw certificate is positive.

\section{Method}

The certificate is only meaningful under a controlled evaluation protocol: the teacher must be fixed, and the reported quantity must be computed from a head that sees only the latent code. This section describes the corresponding training procedure. The goal is not to introduce an additional model class, but to isolate when the latent pathway can and cannot support a nonconstant raw witness. Figure~\ref{fig:arch} gives the main architectural view, and Appendix Table~\ref{tab:repro_summary} records the full run-level implementation details.

\tightparagraph{Warm-up and teacher construction.}
We begin with a short VAE warm-up, before using any fixed teacher certificate. The warm-up provides the feature representation on which candidate GMM teachers are fitted. We then score the candidates, choose one informative teacher distribution \(T(\cdot\mid x)\), and keep it fixed for the downstream comparison.

This distinction matters for how the margin is interpreted. In the teacher-search setting, the selected teacher and its empirical threshold \(\widehat I_T\) are defined per run. In the fixed-\(T_0\) setting, used for the Tiny-ImageNet-200 experiments, teacher search is performed independently for each seed and the resulting targets \(T_0(x)\) are cached before training the downstream variants. The encoder remains trainable; only the teacher targets are frozen.

\tightparagraph{Training modes.}
We compare four modes. \emph{Full} training uses reconstruction, latent KL, alignment, and balance terms. \emph{No-alignment} removes the explicit alignment term and serves as the induced-collapse control. \emph{Rescue} starts from the endpoint of the corresponding no-alignment run and resumes training with alignment enabled. \emph{Fixed-\(T_0\)} training uses cached targets \(T_0(x)\) for the full, no-alignment, and rescue variants.

The rescue setting is intentionally strict. It does not restart from a fresh initialization, but from a checkpoint that has already entered the collapsed region according to the raw witness. A successful rescue therefore shows that reintroducing alignment can move the existing model back across the boundary, rather than merely that a new run can avoid collapse.

\tightparagraph{Controlled comparisons.}
Across these modes, the encoder, decoder, latent dimensionality, and raw \(z\)-only assignment head are kept fixed. The variants differ only in whether the alignment term is active, whether training starts from a fresh initialization or from a completed no-alignment checkpoint, and whether the target is the selected teacher \(T(x)\) or a cached fixed target \(T_0(x)\). This keeps the prevention--collapse--rescue comparison tied to the training intervention rather than to changes in architecture or model capacity.

\tightparagraph{Optimization protocol.}
Each run follows the same high-level sequence: warm up the VAE, fit and score teacher candidates on warm-up features, select a fixed target, and then train one of the downstream variants. For rescue, the initial checkpoint is the endpoint of the paired no-alignment run.

At each reporting point, we compute the certificate quantities from the raw head:
\[
S^{raw}_\theta(\cdot\mid x)=\mathrm{softmax}(g_\theta(z_\phi(x))).
\]
We report the raw alignment loss \(L^{raw}_{\mathrm{align}}\), the empirical teacher threshold \(\widehat I_T\), and the practical margin \(G_\tau\). For fixed-target experiments we additionally report \(G^{T_0}_\tau\). Appendix~\ref{app:experimental_protocol} summarizes the protocols, saved artifacts, schedules, and reporting conventions.

\tightparagraph{Raw and used heads.}
All main claims are evaluated using the raw \(z\)-only head. The implementation also supports optional training-side protections, but these are optimization aids rather than part of the theorem or the main evaluation. We only allow protections that preserve constancy: if all raw rows are equal, then the protected rows must also remain equal. This prevents an auxiliary head from creating artificial input dependence after the latent path has collapsed. Appendix~\ref{app:detailed_system_overview} and Figure~\ref{fig:arch_appendix} describe this optional branch. The CIFAR-100 and Tiny-ImageNet-200 results reported in the main text use the raw-head/plain-objective path unless explicitly stated otherwise.

\begin{center}
\setlength{\fboxsep}{5.5pt}
\fbox{\begin{minipage}{\dimexpr\linewidth-2\fboxsep-2\fboxrule\relax}
\small
\textbf{Algorithm 1: Certificate-guided training and rescue.}

\vspace{0.15em}
\begin{enumerate}[leftmargin=1.35em,label=\arabic*.,topsep=2pt,itemsep=1.2pt,parsep=0pt,partopsep=0pt]
\item Warm up \(q_\phi\), \(p_\psi\), and the raw assignment head.
\item Fit candidate GMM teachers on warm-up features; choose \(T(x)\), or cache \(T_0(x)\) for fixed-target runs.
\item Train the full, no-alignment, rescue-from-no-alignment, or fixed-\(T_0\) variant.
\item At each report point, compute the raw certificate from \(S^{raw}\): \(L^{raw}_{\mathrm{align}}\), \(\widehat I_T\), \(G_\tau\), and, if applicable, \(G^{T_0}_\tau\).
\item Use optional protection variants only as training aids; the reported certificate remains raw-head based.
\end{enumerate}
\end{minipage}}
\end{center}
\vspace{0.2em}

\tightparagraph{Teacher-quality diagnostics.}
The constant-student identity only requires a fixed nonconstant teacher family. In finite-sample experiments, however, a poor teacher can make the diagnostic uninformative. We therefore score candidate teachers by teacher mutual information, mean top-1 margin, high-margin fraction, hard and soft usage balance, and minimum component mass. These filters remove nearly empty components and other degenerate targets. They are not assumptions of the theorem. A failed filter indicates a teacher-construction issue, not a failure of the constant-baseline identity; conversely, results obtained under an infeasible teacher are treated as stress cases rather than pooled into the strict-feasible mean.

\tightparagraph{Reporting convention.}
The certificate values in the paper are always computed from the raw \(z\)-only head. Protected heads, calibrated heads, linear probes, PSNR, active units, and post-hoc classifiers are reported only as diagnostics. They can help interpret a run, but they do not replace the raw teacher-relative margin.

\tightparagraph{Why rescue can work.}
Rescue is an optimization procedure rather than a theorem-level guarantee. At a constant student \(S_x\equiv\bar T\), KL alignment has logit descent direction \(T_x-\bar T\). Thus a nonconstant teacher supplies an input-dependent signal; if the encoder and raw head remain locally responsive to it, alignment can move the raw witness away from the constant basin. Appendix~\ref{app:guarded-dynamics} gives the corresponding continuous-time sufficient condition and failure mode.

\section{Experiments}
\label{sec:experiments}

We use the experiments to test whether the constant-student threshold actually organizes training behavior. The evaluation is built around three questions. First, does the full objective keep the raw latent witness on the certified side of the boundary? Second, does removing alignment drive the witness back toward the constant-student regime? Third, once this happens, can alignment recover the certificate from the same collapsed checkpoint?

\tightparagraph{Protocol and metrics.}
All certificate quantities are computed from the raw \(z\)-only head. We report the empirical teacher threshold \(\widehat I_T\), the raw alignment loss \(\widehat L^{raw}_{\mathrm{align}}\), and the practical margin
\[
G_\tau=\widehat I_T-\widehat L^{raw}_{\mathrm{align}}-0.1 .
\]
For fixed-target experiments, we instead report
\[
G^{T_0}_\tau=I(T_0)-L_{\mathrm{align}}(T_0,S^{raw})-0.1 .
\]
PSNR, active units, linear probes, and post-hoc heads are included only as diagnostics. They can reveal useful side information about a representation, but they do not replace the raw teacher-relative certificate.

\tightparagraph{What each experiment tests.}
CIFAR-100 is used as a teacher-search stress test. For each reported run, we select a teacher from warm-up features and evaluate the raw witness against the teacher used in that run. The resulting threshold \(\widehat I_T\) is therefore run-specific and is reported explicitly. Tiny-ImageNet-200 is used as a larger fixed-target test: for each seed, we search for a teacher, cache \(T_0(x)\), and then run the full, no-alignment, and rescue variants against that same cached target. Finally, the baseline comparison asks whether common VAE-style anti-collapse heuristics satisfy the same raw certificate without the proposed alignment mechanism.

\tightparagraph{Reconstruction is not enough.}
A recurring pattern in the results is that reconstruction quality and the raw certificate can disagree. No-alignment runs can obtain competitive, and sometimes better, PSNR while the raw margin is negative and the student MI is nearly zero. This is not a contradiction. A sufficiently expressive decoder can reconstruct through routes that do not require the raw latent assignment to carry teacher-relative input variation. The certificate is designed to test this narrower property directly.

\subsection{CIFAR-100 teacher-search stress test}

Table~\ref{tab:cifar100-search} summarizes the CIFAR-100 runs. Seeds 0 and 1 pass the strict teacher-feasibility filter. Seed 2 fails the minimum-mass criterion, so we treat it as a stress case rather than include it in a strict-feasible aggregate. The qualitative pattern is nevertheless the same in all three seeds: full training gives a positive raw margin, no-alignment drives the witness into a near-constant regime, and rescue restores a positive margin from the no-alignment checkpoint.

This experiment also separates teacher quality control from the certificate itself. The feasibility flag describes the teacher-construction filter; the certificate is then evaluated against the fixed teacher selected for that particular run. Since this is not the fixed-\(T_0\) protocol, \(\widehat I_T\) should not be read as a single shared threshold across variants. It is the empirical threshold for the teacher used in each run.

\begin{table}[t]
\centering
\small
\caption{CIFAR-100 teacher-search endpoints. The margin is computed as
\(G_\tau=\widehat I_T-\widehat L^{raw}_{\mathrm{align}}-0.1\).
Full and rescue runs have positive raw margins; no-alignment runs are negative. Seed~2 fails the strict feasibility filter because of the minimum-mass criterion, but the same prevention--collapse--rescue pattern remains visible.}
\label{tab:cifar100-search}
\setlength{\tabcolsep}{3.5pt}
\begin{tabular*}{\linewidth}{@{\extracolsep{\fill}}llccccc@{}}
\toprule
Seed & Teacher QC & Variant & \(\widehat I_T\) &
\(\widehat L^{raw}_{\mathrm{align}}\) & \(G_\tau\) & Student MI \\
\midrule
0 & feasible & Full    & 0.6924 & 0.0382 &  0.5542 & 0.6324 \\
0 & feasible & NoAlign & 0.6389 & 4.6051 & -4.0662 & \(6.9{\times}10^{-6}\) \\
0 & feasible & Rescue  & 0.6983 & 0.0595 &  0.5388 & 0.6607 \\
\midrule
1 & feasible & Full    & 0.7774 & 0.0662 &  0.6112 & 0.6976 \\
1 & feasible & NoAlign & 0.6024 & 4.6051 & -4.1027 & \(6.0{\times}10^{-6}\) \\
1 & feasible & Rescue  & 0.7476 & 0.0777 &  0.5699 & 0.6395 \\
\midrule
2 & stress case & Full    & 0.2693 & 0.0404 &  0.1289 & 0.2676 \\
2 & stress case & NoAlign & 0.1603 & 4.6052 & -4.5449 & \(1.1{\times}10^{-5}\) \\
2 & stress case & Rescue  & 0.4294 & 0.0485 &  0.2809 & 0.4209 \\
\bottomrule
\end{tabular*}
\end{table}

\begin{figure}[t]
\centering
\includegraphics[width=0.96\linewidth]{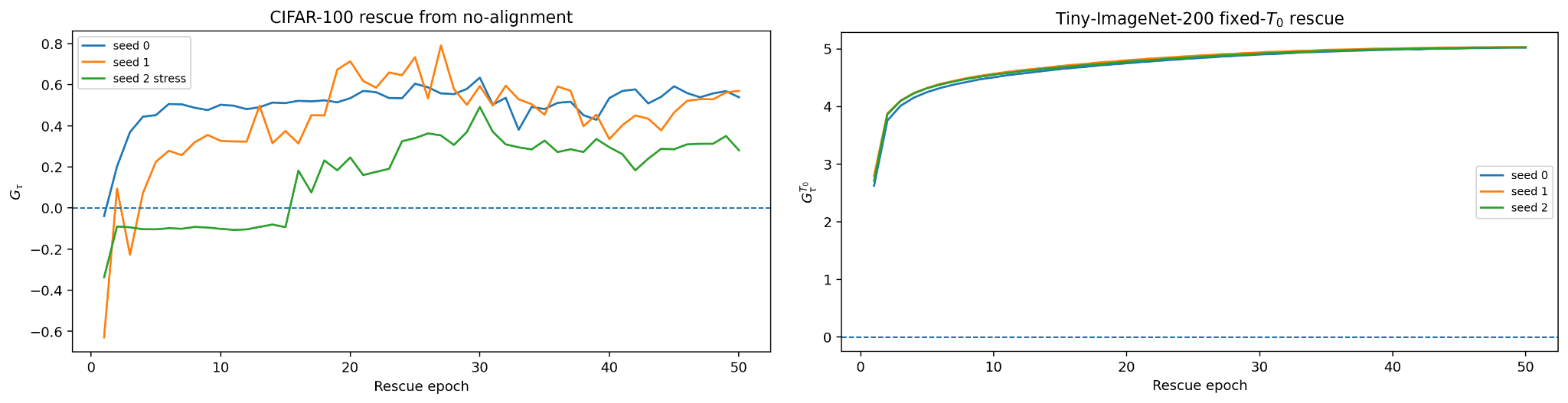}
\caption{Compact rescue trajectories. Rescue begins from collapsed checkpoints and restores positive raw margins on CIFAR-100 and Tiny-ImageNet-200.}
\label{fig:rescue-curves}
\vspace{-0.5em}
\end{figure}

\subsection{Tiny-ImageNet-200 fixed-target result}

Tiny-ImageNet-200 tests the same boundary at a larger scale under a fixed-target protocol. For each seed, we search for a teacher, cache \(T_0(x)\), and then train the full, no-alignment, and rescue variants using that cached target. The encoder remains trainable, and we do not use anchor regularization. Because the target is fixed, the relevant quantity is \(G^{T_0}_\tau\), not a current-teacher or dynamically reselected margin.

Table~\ref{tab:tiny-fixed} shows a clean separation. Full training is strongly positive across all three seeds. No-alignment stays near the fixed-target constant boundary and has essentially zero student MI. Rescue returns to the certified side of the boundary, with margins comparable to full training. The result mirrors CIFAR-100, but in a setting where the teacher target is cached before downstream training.

\begin{table}[t]
\centering
\small
\caption{Tiny-ImageNet-200 fixed-\(T_0\) endpoint summary over three seeds. All three teachers pass the feasibility filter. Full and rescue are strongly certificate-positive; no-alignment remains near the fixed-target constant boundary.}
\label{tab:tiny-fixed}
\setlength{\tabcolsep}{4pt}
\begin{tabular*}{\linewidth}{@{\extracolsep{\fill}}lccc@{}}
\toprule
Variant & \(G^{T_0}_\tau\) mean\(\pm\)std &
Student MI mean\(\pm\)std & Align mean\(\pm\)std \\
\midrule
Full    & \(5.0249\pm0.0083\)  & \(4.8687\pm0.0080\) & \(0.0725\pm0.0067\) \\
NoAlign & \(-0.0152\pm0.0029\) & \((6.3\pm2.1){\times}10^{-5}\) & \(5.1126\pm0.0068\) \\
Rescue  & \(5.0267\pm0.0063\)  & \(4.8425\pm0.0130\) & \(0.0707\pm0.0050\) \\
\bottomrule
\end{tabular*}
\end{table}

\subsection{Baseline comparison on CIFAR-100}

We next ask whether standard anti-collapse heuristics already satisfy the raw teacher-relative certificate. Table~\ref{tab:cifar100-baselines} suggests that they do not. Several baselines preserve active units or achieve reasonable reconstruction quality, and a post-hoc classifier can sometimes recover partial teacher predictability. However, every raw training-time margin remains negative.

The distinction between raw and post-hoc measurements is important. A representation may contain information that a separately trained classifier can extract, while the raw head used during training still fails to beat the constant-student threshold. Our certificate concerns the latter quantity: whether the training-time \(z\)-only witness carries enough teacher-relative input variation to cross the boundary.

\begin{table}[t]
\centering
\small
\caption{CIFAR-100 baseline diagnostics under the seed-0 teacher. All listed baselines, including VQ-VAE, have negative raw \(G_\tau\). Post-hoc predictability improves in some cases but does not replace the raw training-time certificate.}
\label{tab:cifar100-baselines}
\setlength{\tabcolsep}{4pt}
\begin{tabular*}{\linewidth}{@{\extracolsep{\fill}}lcccc@{}}
\toprule
Method & Raw \(G_\tau\) & Raw MI & Posthoc \(G_\tau\) & Posthoc MI \\
\midrule
Vanilla VAE & -4.0956 & 0.1176 & -1.9233 & 0.2389 \\
\(\beta\)-VAE (\(10^{-5}\)) & -4.1813 & 0.1487 & -0.7503 & 0.2746 \\
\(\beta\)-VAE (\(10^{-3}\)) & -4.0059 & 0.0235 & -1.7958 & 0.0777 \\
KL annealing & -3.4838 & 0.0235 & -1.3313 & 0.0953 \\
Free bits & -4.1183 & 0.1167 & -1.9925 & 0.2314 \\
Minimal GMVAE & -7.3780 & 1.9902 & -1.4372 & 0.5273 \\
VQ-VAE & -4.2482 & 0.0066 & -1.6583 & 0.2001 \\
\bottomrule
\end{tabular*}
\end{table}

Taken together, the experiments support the intended interpretation of the margin. CIFAR-100 shows prevention, induced collapse, and rescue under teacher search. Tiny-ImageNet-200 shows the same pattern under cached fixed targets. Standard VAE-style baselines remain raw-certificate negative under the same teacher-relative test. Full curves, CIFAR-10 sanity checks, the fixed-teacher CIFAR-100 control, and teacher diagnostics are provided in the appendix.

\section{Related Work}
Posterior collapse in VAEs is well documented, especially when expressive decoders learn to model the data while ignoring the latent variable \citep{kingma2014,bowman2016,chen2017,he2019lagging}. Several approaches modify the ELBO, tune information pressure, or encourage latent usage \citep{alemi2018,zhao2019}. Other work changes the latent structure itself, for example through mixture or discrete representations \citep{dilokthanakul2016,oord2017}. These methods are training interventions or architectural alternatives. Our contribution is different: we isolate one collapse mode, derive its exact constant-student boundary under a fixed teacher, and use that boundary as a certificate and rescue target.

Teacher guidance and distillation typically use a teacher to improve a student's predictive behavior \citep{hinton2015}. Here the teacher has a narrower role. It is a fixed self-supervised, pretrained, or preselected distributional reference used to make a collapse boundary measurable. The raw latent-only student is the witness; teacher-aware paths are excluded from certification. This separation is what preserves the implication from constant latent codes to constant raw assignments.

The certificate also differs from post-hoc probing. A probe trained after representation learning can show that some information is recoverable from a representation, but it does not show that the training-time latent pathway avoided the constant-student region. For this reason, our baseline table reports both raw and post-hoc quantities. Post-hoc predictability may be nonzero even when the raw certificate is negative.

\section{Limitations and Discussion}
The certificate targets a specific failure mode: input-independent constant collapse of the raw latent witness. It does not certify semantic optimality, label alignment, downstream state-of-the-art performance, or freedom from every form of posterior collapse. This narrower scope is what makes the result exact. A fixed teacher gives a stable distributional reference, and a strictly latent-only witness makes the constant-collapse boundary computable.

The approach also depends on teacher quality. The theorem requires only a fixed nonconstant teacher, but the empirical value of the certificate improves when the teacher is informative, balanced, and stable. We therefore report teacher feasibility as quality control rather than as a theorem assumption. The CIFAR-100 seed~2 stress case is kept separate for this reason: it is useful evidence about training dynamics under a weaker teacher, but it should not be treated as part of the strict-feasible mean.

Teacher and student balance do not change the certificate itself, but they do affect how informative it is in practice. The constant-student identity only assumes that the teacher is fixed and nonconstant. If the teacher is highly imbalanced, however, the test becomes less discriminative: a constant student can already explain much of the teacher mass by predicting the dominant components.

The same issue can occur on the student side. A positive raw margin rules out a constant witness, but it does not imply that every teacher mode is represented. If the raw student uses only a small subset of components, the latent path may still exhibit teacher-relative partial collapse. We therefore report teacher feasibility and usage-balance diagnostics separately from the theorem-level certificate. When the teacher is informative and the raw student is both aligned and non-degenerate in usage, the positive margin gives stronger evidence against this partial-collapse pattern, while remaining relative to the chosen teacher.

\tightparagraph{Toward quantitative collapse control.}
The teacher is not an additional label source; it is a device for making one collapse boundary visible. Once a boundary is measurable, it can be used by guarded schedules, rescue procedures, adaptive teacher refinement, or curriculum strategies that try to keep the raw latent pathway on the certified side. The present work establishes this control surface for constant collapse. Extending the idea to partial, semantic, or decoder-bypass collapse will require new witnesses and new baselines.

\section{Conclusion}
We introduced an exact certificate for one concrete form of VAE posterior collapse. For a fixed nonconstant teacher, the best constant student has alignment cost \(I_T(X;T)\). A raw latent-only witness whose alignment loss falls below this threshold, with margin \(G_\tau>0\), cannot be constant in the input. This turns input-independent constant collapse from a qualitative symptom into a measurable event.

Empirically, the threshold organizes the observed training dynamics. On CIFAR-100 and Tiny-ImageNet-200, full training stays on the certified side, no-alignment moves into the collapsed region, and rescue from a collapsed checkpoint restores the raw margin. Standard VAE-style baselines remain negative under the same raw certificate, even when other diagnostics appear healthy.

Two directions follow naturally. First, \(G_\tau\) can be used as a training-time guard or scheduler, so that intervention happens before the raw latent pathway crosses the boundary. Second, stronger self-supervised or pretrained teachers can make the same certificate more informative about partial or semantic collapse. A further extension would add a reconstruction-side target, such as \(T(\hat{x})\), and test consistency among \(T(x)\), \(S(z)\), and \(T(\hat{x})\) to expose decoder paths that reconstruct well while bypassing the raw witness. These are not guarantees established here; they are next steps toward moving from endpoint certification to trajectory-level control.

\appendix
\clearpage
\section*{Appendix}

\section*{Overview}
The appendix collects notation, threshold-sensitivity checks, secondary diagnostics, implementation details, additional empirical results, and the rescue-gradient calculation. These materials support the main certificate and experimental claims while keeping the main text focused on the theorem, the protocol, and the primary CIFAR-100 and Tiny-ImageNet-200 results.

\section{Notation}
\begin{table}[H]
\centering
\small
\caption{Main notation used in the paper.}
\label{tab:notation}
\begin{tabular}{ll}
\toprule
Symbol & Meaning \\
\midrule
\(T_x=T(\cdot\mid x)\) & teacher assignment for input \(x\) \\
\(\bar T\) & dataset-average teacher assignment \\
\(\Delta_K\) & \(K\)-simplex of categorical distributions \\
\(I_T(X;T)\) & teacher mutual information; exact best constant-student baseline \\
\(S^{raw}\) & raw student head that depends only on latent \(z\) \\
\(L^{raw}_{\mathrm{align}}\) & teacher-to-raw-student alignment loss \\
\(G_\tau\) & safety margin \(I_T-L^{raw}_{\mathrm{align}}-\tau\); positive means certificate holds \\
\(T_0(x)\) & cached fixed teacher target used in fixed-target experiments \\
\(G^{T_0}_\tau\) & fixed-target safety margin \(I(T_0)-L_{\mathrm{align}}(T_0,S)-\tau\) \\
\bottomrule
\end{tabular}
\end{table}

\section{Sensitivity to the Safety Threshold \texorpdfstring{\(\tau\)}{tau}}
\label{app:tau_sensitivity}
The safety margin is \(G_\tau=\widehat I_T-L^{raw}_{\mathrm{align}}-\tau\) in finite-sample reports. Changing \(\tau\) therefore does not require retraining; it shifts the reported margin by the chosen safety buffer. Table~\ref{tab:tau_sensitivity} reports representative endpoint margins for \(\tau\in\{0.05,0.10,0.20\}\). The main signs are stable for full and rescue runs. The Tiny no-alignment row is intentionally near the fixed-target constant boundary, and the CIFAR-100 seed-2 stress case is the most boundary-sensitive positive case.
\begin{table}[H]
\centering
\small
\caption{Sensitivity of representative endpoint margins to the safety threshold \(\tau\). Values are \(G_\tau\); positive values indicate that the certificate holds.}
\label{tab:tau_sensitivity}
\begin{tabular}{lccc}
\toprule
Endpoint & \(\tau=0.05\) & \(\tau=0.10\) & \(\tau=0.20\) \\
\midrule
CIFAR-100 full, strict-feasible seeds, min & 0.6042 & 0.5542 & 0.4542 \\
CIFAR-100 rescue, strict-feasible seeds, min & 0.5888 & 0.5388 & 0.4388 \\
CIFAR-100 seed 2 full, stress case & 0.1789 & 0.1289 & 0.0289 \\
CIFAR-100 seed 2 rescue, stress case & 0.3309 & 0.2809 & 0.1809 \\
Tiny-ImageNet-200 full, mean fixed-\(T_0\) & 5.0749 & 5.0249 & 4.9249 \\
Tiny-ImageNet-200 no-align, mean fixed-\(T_0\) & 0.0348 & -0.0152 & -0.1152 \\
Tiny-ImageNet-200 rescue, mean fixed-\(T_0\) & 5.0767 & 5.0267 & 4.9267 \\
\bottomrule
\end{tabular}
\end{table}

\section{Secondary Diagnostics}
\label{app:quality_diagnostics}
The main text focuses on certificate metrics. Table~\ref{tab:secondary_quality} records reconstruction and capacity diagnostics that are useful for interpretation but are not certificates.
\begin{table}[H]
\centering
\scriptsize
\caption{Secondary quality diagnostics. PSNR, probes, and active units can help interpret a run, but they do not replace the raw teacher-relative certificate.}
\label{tab:secondary_quality}
\begin{tabular}{llcc}
\toprule
Setting & Variant / method & PSNR & Other diagnostic \\
\midrule
CIFAR-100 seed 0 & Full / NoAlign / Rescue & 2.49 / 2.60 / 2.55 & -- \\
CIFAR-100 seed 1 & Full / NoAlign / Rescue & 2.50 / 2.60 / 2.54 & -- \\
CIFAR-100 seed 2 & Full / NoAlign / Rescue & 2.54 / 2.60 / 2.57 & stress case \\
Tiny-ImageNet-200 & Full / NoAlign / Rescue & 15.26 / 16.71 / 15.65 & no-align has highest PSNR but near-zero MI \\
Vanilla VAE & baseline & 19.79 & 64 active units \\
$\beta$-VAE ($10^{-5}$) & baseline & 21.23 & 64 active units \\
$\beta$-VAE ($10^{-3}$) & baseline & 16.15 & 21 active units \\
KL annealing & baseline & 16.17 & 19 active units \\
Free bits & baseline & 19.79 & 64 active units \\
Minimal GMVAE & baseline & 19.81 & 64 active units \\
VQ-VAE & baseline & 14.72 & 64 active units \\
\bottomrule
\end{tabular}
\end{table}

\section{Related-Work Positioning}
Posterior collapse has been a central concern in VAE research since the sequence and lossy-decoder settings where powerful decoders can bypass the latent pathway almost entirely \citep{bowman2016,chen2017}. Later work sharpened the optimization side of the story, showing that inference lag and related training dynamics can exacerbate collapse even when informative latents are in principle available \citep{he2019lagging}. More generally, a large body of work treats collapse as a mismatch between the ELBO objective and the desired information flow through the latent variables \citep{alemi2018,zhao2019}.

Existing mitigation strategies mostly operate by softening KL pressure, explicitly encouraging information flow, or changing the latent parameterization. Examples include broken-ELBO analyses and related free-bits thinking \citep{alemi2018}, InfoVAE-style objectives that directly favor informative latents \citep{zhao2019}, and discrete or mixture-structured alternatives such as GMVAE and VQ-VAE that make nontrivial assignments easier to sustain \citep{dilokthanakul2016,oord2017}. These methods are valuable, but they typically aim at improving latent usefulness in a broad sense rather than certifying exclusion of one specific structural collapse mode.

Our paper differs in both scope and proof target. We target a different proof object from general high-mutual-information or semantic-usefulness guarantees. Instead, we isolate a structurally identifiable mode: input-independent constant collapse. The theorem object is therefore not a teacher-aware classifier, but a strictly z-only head whose constancy is directly implied by constant latent codes. Distillation is relevant only in spirit: the teacher provides an input-dependent reference family, while the z-only student head is the certified witness \citep{hinton2015}. Accordingly, the main technical object is an exact constant baseline rather than a general lower bound on latent information.

\section{Experimental Protocol and Implementation}
\label{app:experimental_protocol}
\subsection{Experimental questions}
The experiments ask four concrete questions:
\begin{enumerate}
\item Does full teacher-guided training produce a strongly positive raw theorem margin?
\item Does removing explicit alignment collapse the raw z-only witness even when reconstruction remains usable?
\item Why is the practical safety margin $\Gtau$ needed beyond the bare theorem inequality?
\item Can conditional shared-safe protection rescue a run that has already collapsed under no-alignment training?
\end{enumerate}
Probe accuracy and PSNR are reported only as secondary diagnostics.

\subsection{Datasets and Metrics}
We report completed runs on CIFAR-10 and CIFAR-100. The key metrics are: teacher mutual information $I_T$, raw alignment $L^{raw}_{\mathrm{align}}$, bare raw margin $I_T-L^{raw}_{\mathrm{align}}$, practical margin $\Gtau$, student mutual information, linear-probe accuracy, and PSNR. Following the broader anti-collapse literature, we treat linear probes and reconstruction quality only as secondary diagnostics rather than theorem objects \citep{bowman2016,chen2017,zhao2019}. For CIFAR-10 we also include a rescue run that starts from a collapsed no-alignment checkpoint and then turns on conditional shared-safe protection.

\subsection{Exact derivation of the constant baseline}
For any constant $\alpha\in\Delta_K$,
\begin{align}
\E_x\KL(T_x\|\alpha)
&=\E_x \sum_c T_x(c)\log \frac{T_x(c)}{\alpha(c)} \\
&=\E_x \sum_c T_x(c)\log \frac{T_x(c)}{\bar T(c)} + \sum_c \bar T(c)\log \frac{\bar T(c)}{\alpha(c)} \\
&= I_T(X;\Tc)+\KL(\bar T\|\alpha).
\end{align}
Hence the minimum constant alignment cost is exactly $I_T(X;\Tc)$.

\subsection{Why raw and used quantities are both reported}
The theorem object is always the raw z-only head. Practical training, however, may use either the raw head or a constancy-preserving shared-safe calibrated head. We therefore record both raw and used quantities:
\begin{itemize}
\item raw alignment and raw margins for theorem reporting;
\item used alignment for the actual optimization objective; and
\item guard-set raw margins for optional rollback/backtracking tests.
\end{itemize}
This separation distinguishes theorem reporting from optimization details.

\subsection{Empirical Scope}
The empirical scope is chosen to test the certificate rather than to provide a broad benchmark study. The main CIFAR-100 result is a per-seed teacher-search stress test: each seed constructs its own teacher and then runs full, no-alignment, and rescue under that seed-specific fixed teacher. Two CIFAR-100 seeds pass the strict empirical teacher-feasibility filter, while one seed fails the strict minimum-mass criterion but still exhibits the same prevention--collapse--rescue certificate pattern. We therefore report this seed separately as a teacher-feasibility stress case rather than pooling it into a strict-feasible mean. The appendix also includes CIFAR-100 baseline diagnostics, Tiny-ImageNet-200 fixed-$\Tzero$ three-seed results with independently searched teachers, CIFAR-10 sanity checks, and a CIFAR-100 fixed-teacher control. Rollback and backtracking guards are left as training-side extensions; they are not part of the main reported objective.

\subsection{Training Protocol Summary}
The training protocol has four stages:
\begin{enumerate}
\item \textbf{Warm-up:} train the backbone before teacher fitting.
\item \textbf{Teacher construction and selection:} fit candidate teachers on warm-up features and choose one using informativeness, confidence, sharpness, balance, and coverage diagnostics.
\item \textbf{Main teacher-guided training:} optimize the decoder with $(z,\Tc)$ while monitoring the raw z-only alignment margin.
\item \textbf{Rescue and optional protection:} for the main evidence chain, initialize rescue from a collapsed no-alignment checkpoint and continue with the plain full objective. Shared-safe and rollback are optional training-side controls, not part of the main reported objective.
\end{enumerate}
This staging separates theorem reporting, which uses raw \(z\)-only quantities, from practical optimization details such as teacher quality control and optional protection.

\subsection{Detailed System Overview}
\label{app:detailed_system_overview}
Figure~\ref{fig:arch_appendix} provides a more detailed appendix-level system view, mainly to separate the staged workflow from the main-text architectural summary. As in the main figure, the teacher path is a fixed reference path that produces the assignment vector $T(x)$; in the fixed-$\Tzero$ setting the cached target is not a penalty tying $\mu_\theta(x)$ to $\mu_0(x)$. The lower-right branch in the figure shows optional training-side mechanisms only. The main CIFAR-100 and Tiny-ImageNet-200 full/rescue results instantiate the raw-head/plain-objective path rather than the shared-safe or rollback path.

\begin{figure}[H]
  \centering
  \includegraphics[width=0.98\linewidth]{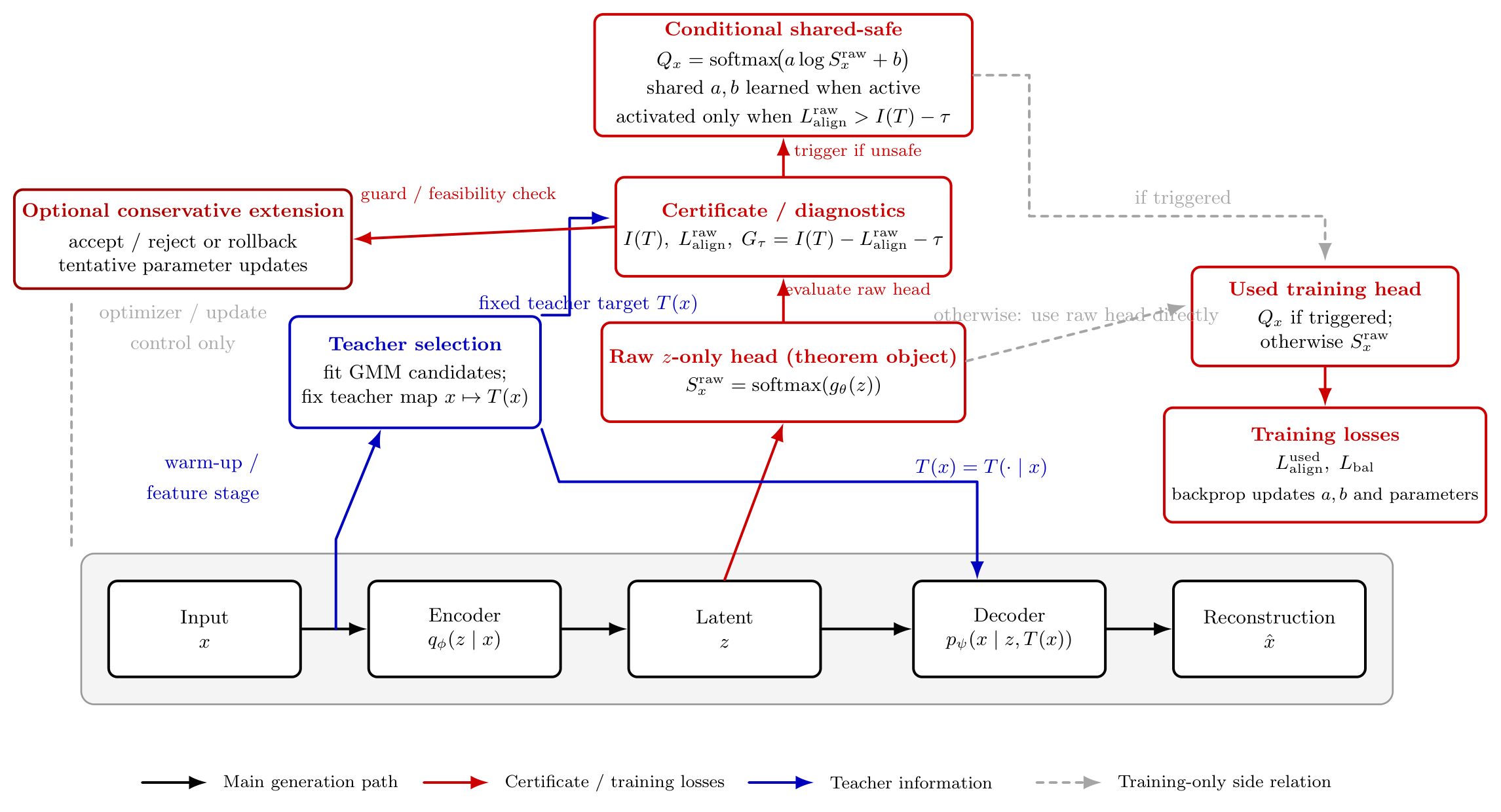}
  \caption[Detailed system overview.]{Detailed system overview. The appendix figure makes the staged workflow explicit: warm-up provides features for teacher fitting, the selected teacher map $x\mapsto T(x)$ is then fixed for the main stage, and certification remains tied to the raw z-only head $\Sc^{raw}$ even though generation may use $(z,T(x))$. Conditional shared-safe, used-head projection, and rollback/backtracking are shown only as optional training-side extensions. They are not used in the main CIFAR-100 or Tiny-ImageNet-200 evidence chain.}
  \label{fig:arch_appendix}
\end{figure}

\begin{table}[H]
\centering
\small
\caption{Certification objects versus optimization aids. The theorem is evaluated only from the certificate objects; optimization aids may help training but are not proof objects.}
\label{tab:cert_opt_aids}
\begin{tabular}{p{0.30\linewidth}p{0.62\linewidth}}
\toprule
Role & Objects \\
\midrule
Certificate objects & raw z-only witness \(S^{raw}\), fixed teacher target \(T\) or \(T_0\), \(I_T\), \(L^{raw}_{\mathrm{align}}\), \(G_\tau\), \(G^{T_0}_\tau\) \\
Optimization aids & decoder access to \(T(x)\), balance loss, optional shared-safe/used head, optional rollback or guard logic \\
\bottomrule
\end{tabular}
\end{table}

\subsection{Implementation Details}
The reported experiments use the same reporting convention across runs:
\begin{itemize}
\item The theorem quantity is always computed from the raw z-only head.
\item The practical safety margin uses $\tau=0.1$.
\item Shared-safe projection is optional and is not used in the main CIFAR-100 or Tiny-ImageNet-200 evidence chain.
\item The main full and rescue runs use the plain full objective; rescue differs from full only by initialization from the no-alignment collapsed checkpoint.
\item The CIFAR-10 rescue run starts from the completed no-alignment checkpoint and reuses the corresponding fixed teacher.
\item Probe accuracy and PSNR are included only as secondary diagnostics and are not used to define certification.
\end{itemize}

\subsection{Reproducibility Summary}
\label{app:repro_summary}
Table~\ref{tab:repro_summary} records the run-level conventions used in the reported experiments. The goal is not to duplicate every command-line flag, but to make the full/no-alignment/rescue/fixed-$T_0$ protocol explicit enough for checkpoints, teacher caches, and metric summaries to be interpreted consistently.

\begin{table}[H]
\centering
\small
\caption{Compact reproducibility summary for the reported experiments.}
\label{tab:repro_summary}
\setlength{\tabcolsep}{4pt}
\begin{tabular}{p{0.28\linewidth}p{0.64\linewidth}}
\toprule
Item & Setting / convention \\
\midrule
Backbone sharing & Full, no-alignment, and rescue use the same encoder, decoder, and raw z-only assignment head. \\
Teacher construction & Teacher candidates are fitted after warm-up on warm-up features and scored by teacher MI, margin, usage balance, and minimum component mass. \\
Teacher usage & CIFAR-100 per-seed teacher search constructs a teacher independently for each seed; Tiny-ImageNet-200 also re-searches a teacher independently for each seed before caching $T_0(x)$. \\
Training modes & Full enables alignment; no-alignment disables it; rescue resumes from a completed no-alignment checkpoint with alignment enabled; fixed-$T_0$ uses cached pseudo-targets. \\
Safety threshold & The practical margin uses $\tau=0.1$ throughout the reported experiments. \\
Shared-safe trigger & The used-head protection is activated only when $L^{raw}_{\mathrm{align}}>I_T-\tau$; otherwise the raw head is optimized directly. \\
Reporting rule & All certificate values are computed from the raw z-only head; used-head quantities are optimization helpers only. \\
Secondary diagnostics & Student MI, linear probe, PSNR, and active units are reported as diagnostics and never substituted for the certificate. \\
\bottomrule
\end{tabular}
\end{table}

\subsection{Reproducibility Artifacts}
\label{app:repro_artifacts}
The reproducibility package should include an anonymized code archive or repository snapshot with a README. The README should specify installation, datasets, teacher search and caching, CIFAR-100 full/no-alignment/rescue commands, Tiny-ImageNet-200 fixed-$T_0$ commands, baseline commands, and figure-generation scripts. For anonymized review, commands should be provided as scripts that expose teacher search, caching, full training, no-alignment, rescue, baseline evaluation, and figure generation without revealing author identity.

\subsection{Existing Assets and Licenses}
\label{app:asset-licenses}

The experiments use standard public academic benchmarks and open-source software packages. We do not redistribute the raw datasets in the supplementary artifact. The released code expects users to obtain the datasets from their original sources or through standard library download utilities, and to comply with the corresponding dataset terms of use.

\begin{table}[h]
\centering
\small
\begin{tabular}{p{0.22\linewidth}p{0.47\linewidth}p{0.23\linewidth}}
\toprule
Asset & Use in this work & Source / license notes \\
\midrule
CIFAR-10 & Sanity-check experiments and development runs. & Standard public CIFAR benchmark; users should obtain it from the original CIFAR source or torchvision and follow the original dataset terms. \\
CIFAR-100 & Main teacher-search, rescue, and baseline experiments. & Standard public CIFAR benchmark; users should obtain it from the original CIFAR source or torchvision and follow the original dataset terms. \\
Tiny-ImageNet-200 & Fixed-\(T_0\) rescue experiments. & Public academic benchmark derived from ImageNet-style data; users should obtain it from the original Tiny-ImageNet distribution and comply with the applicable dataset terms. \\
PyTorch / torchvision & Model training, data loading, and neural-network components. & Open-source Python packages; users should follow the licenses distributed with the installed packages. \\
NumPy, pandas, scikit-learn, matplotlib & Numerical computation, teacher fitting, summaries, and plotting. & Open-source Python packages; users should follow the licenses distributed with the installed packages. \\
\bottomrule
\end{tabular}
\caption{Existing datasets and software assets used in the experiments. The supplementary artifact contains code and scripts only; it does not redistribute raw benchmark data.}
\label{tab:asset-licenses}
\end{table}

The supplementary code package is released only for reproducing the experiments reported in this paper. It contains training scripts, smoke tests, baseline scripts, plotting utilities, and configuration files. It does not include model checkpoints intended for deployment, pretrained generative models, private data, or scraped datasets.

\subsection{Compute environment}
The experiments were run in a Linux container on an AutoDL-style host. The shell reported Python~3.12.3 and pip~24.0. The host CPU was an Intel Xeon Platinum 8260 class machine, while the GPU pool exposed to the container contained eight NVIDIA GeForce RTX 4080 SUPER devices with NVIDIA driver 580.105.08. The Python environment used PyTorch 2.8.0+cu128 and torchvision 0.23.0+cu128, with NumPy 2.3.2, scikit-learn 1.8.0, pandas 3.0.1, and matplotlib 3.10.5. This paper does not claim distributed multi-GPU training; experiments were launched as separate runs from saved scripts and logs.

\begin{table}[H]
\centering
\small
\caption{Compact compute and software environment summary.}
\label{tab:compute_env}
\setlength{\tabcolsep}{4pt}
\begin{tabular}{p{0.30\linewidth}p{0.62\linewidth}}
\toprule
Item & Recorded environment \\
\midrule
CPU host & Intel Xeon Platinum 8260 CPU @ 2.40GHz; 96 logical CPUs visible to the host; the runtime shell reported \texttt{nproc=1}. \\
GPU pool & Eight NVIDIA GeForce RTX 4080 SUPER devices visible through \texttt{/proc/driver/nvidia/gpus}. \\
NVIDIA driver & 580.105.08. \\
Python / pip & Python 3.12.3; pip 24.0. \\
Core ML stack & PyTorch 2.8.0+cu128; torchvision 0.23.0+cu128; CUDA 12.8 userland packages. \\
Scientific stack & NumPy 2.3.2; SciPy 1.17.1; scikit-learn 1.8.0; pandas 3.0.1; matplotlib 3.10.5. \\
\bottomrule
\end{tabular}
\end{table}

\subsection{Teacher Selection for CIFAR-100}
The selected CIFAR-100 teacher in the reported paired-teacher package has \texttt{feasible=true} under the strict minimum-component-mass filter. This does not change the theorem itself, which only requires a fixed nonconstant teacher family, but it does strengthen the practical teacher-selection story relative to earlier teacher choices.

\begin{figure}[H]
  \centering
  \includegraphics[width=0.78\linewidth]{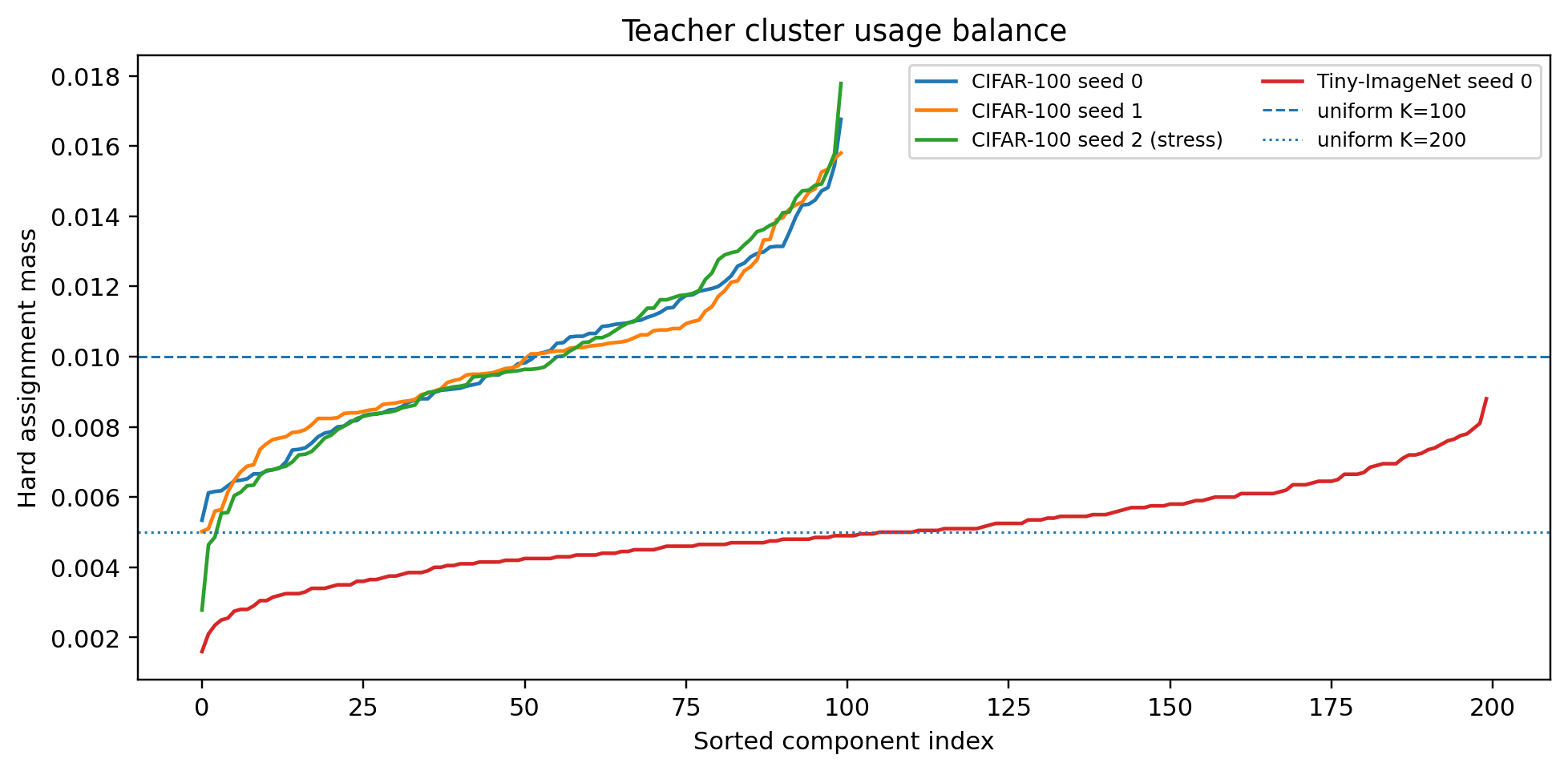}
  \caption{Teacher cluster usage balance for the selected CIFAR-100 teacher and the Tiny-ImageNet-200 seed-0 teacher. Counts are normalized and sorted for readability. The selected teachers are not perfectly uniform, but their hard and soft usage diagnostics avoid degenerate single-cluster targets.}
  \label{fig:teacher_usage_balance}
\end{figure}

\section{Additional Empirical Results}
\subsection{CIFAR-10 Sanity-Check Endpoints}
CIFAR-10 is not part of the main evidence chain, but it remains a useful small-scale sanity check. The full/no-alignment/rescue pattern is numerically clean and supports the mechanism on a simpler benchmark.
\begin{table}[H]
\centering
\small
\caption{CIFAR-10 sanity-check endpoints. CIFAR-10 is reported as a small-scale mechanism check rather than a main evidence block.}
\label{tab:c10_sanity}
\setlength{\tabcolsep}{4pt}
\resizebox{\linewidth}{!}{%
\begin{tabular}{lcccccc}
\toprule
Variant & $I_T$ & Align & Bare margin & Student MI & PSNR & Probe \\
\midrule
Full & 1.9127 & 0.0385 & 1.8742 & 0.7081 & 2.2758 & 0.3712 \\
No alignment & 1.9130 & 1.9436 & -0.0306 & $2.86\times10^{-7}$ & 2.4543 & 0.3860 \\
Rescue from noalign & 1.6668 & 0.0182 & 1.6486 & 0.9861 & 2.3829 & 0.3624 \\
\bottomrule
\end{tabular}}
\end{table}

\subsection{CIFAR-100 Fixed-Teacher Control}
The older CIFAR-100 fixed-teacher run is retained as a control experiment. It isolates optimization dynamics under one selected feasible teacher: the teacher is held fixed, while training seeds change. This result should not be pooled with the per-seed teacher-search protocol in the main text.

\begin{table}[H]
\centering
\scriptsize
\caption{CIFAR-100 fixed-teacher training-seed control. This table is a controlled optimization-dynamics check under one selected feasible teacher, not a per-seed teacher-construction robustness result.}
\label{tab:c100_fixed_teacher_appendix}
\setlength{\tabcolsep}{3pt}
\resizebox{\linewidth}{!}{%
\begin{tabular}{lllccccc}
\toprule
Seed & Variant & Status & $I_T^{cur}$ & Align & $\Gtau$ & Student MI & PSNR \\
\midrule
0 & Full & final & 0.3624 & 0.0223 & 0.2401 & 0.3201 & 2.5275 \\
0 & NoAlign & final & 0.0686 & 4.6052 & -4.6366 & $6.91\times10^{-6}$ & 2.6029 \\
0 & Rescue & final & 0.5668 & 0.0274 & 0.4393 & 0.5583 & 2.5657 \\
1 & Full & final & 0.6472 & 0.0262 & 0.5210 & 0.6328 & 2.5357 \\
1 & NoAlign & final & 0.1514 & 4.6052 & -4.5537 & $7.83\times10^{-6}$ & 2.6031 \\
1 & Rescue & final & 0.6799 & 0.0274 & 0.5525 & 0.6337 & 2.5556 \\
2 & Full & final & 0.2287 & 0.0174 & 0.1113 & 0.2057 & 2.5486 \\
2 & NoAlign & final & 0.0560 & 4.6052 & -4.6492 & $1.07\times10^{-5}$ & 2.6045 \\
2 & Rescue & final & 0.5127 & 0.0144 & 0.3984 & 0.5076 & 2.5532 \\
\bottomrule
\end{tabular}}
\end{table}

\subsection{CIFAR-100 Teacher-Feasibility Stress Case}
The per-seed CIFAR-100 seed~2 teacher fails the strict minimum-mass feasibility filter, but it is not a degenerate teacher: it has $I_T=4.3342$, mean top-1 margin $0.8407$, high-margin fraction $0.9475$, hard-balance KL $0.0393$, and soft-usage KL $0.0388$. The failure is therefore a strict teacher-construction quality-control failure, not a failure of the constant-collapse certificate. As shown in Table~\ref{tab:cifar100-search}, full/no-alignment/rescue still follow the desired sign pattern.

\subsection{Additional Variant Checks}
Additional CIFAR-100 shared-safe and no-late-refinement variants were run under obsolete teacher choices and are not included in the main evidence chain. The reported CIFAR-100 claims use the per-seed teacher-search protocol; the seed~2 teacher that fails the strict feasibility filter is reported explicitly as a stress case rather than pooled into the strict-feasible mean.

\subsection{Scope of Additional Results}
The additional results focus on figures that directly illuminate the prevention--collapse--rescue pattern: rescue transitions, run-level training curves, and compact reproducibility summaries. CIFAR-100 per-seed curves, Tiny-ImageNet-200 fixed-$T_0$ curves, the fixed-teacher control, and baseline diagnostic curves are included below.

\subsection{Rescue and Training-Curve Panels}
The appendix figures emphasize the rescue pathway, the CIFAR-100 run-level trajectories, and the Tiny-ImageNet-200 fixed-$T_0$ trajectories, complementing the endpoint tables in the main paper.

\begin{figure}[H]
  \centering
  \includegraphics[width=0.82\linewidth]{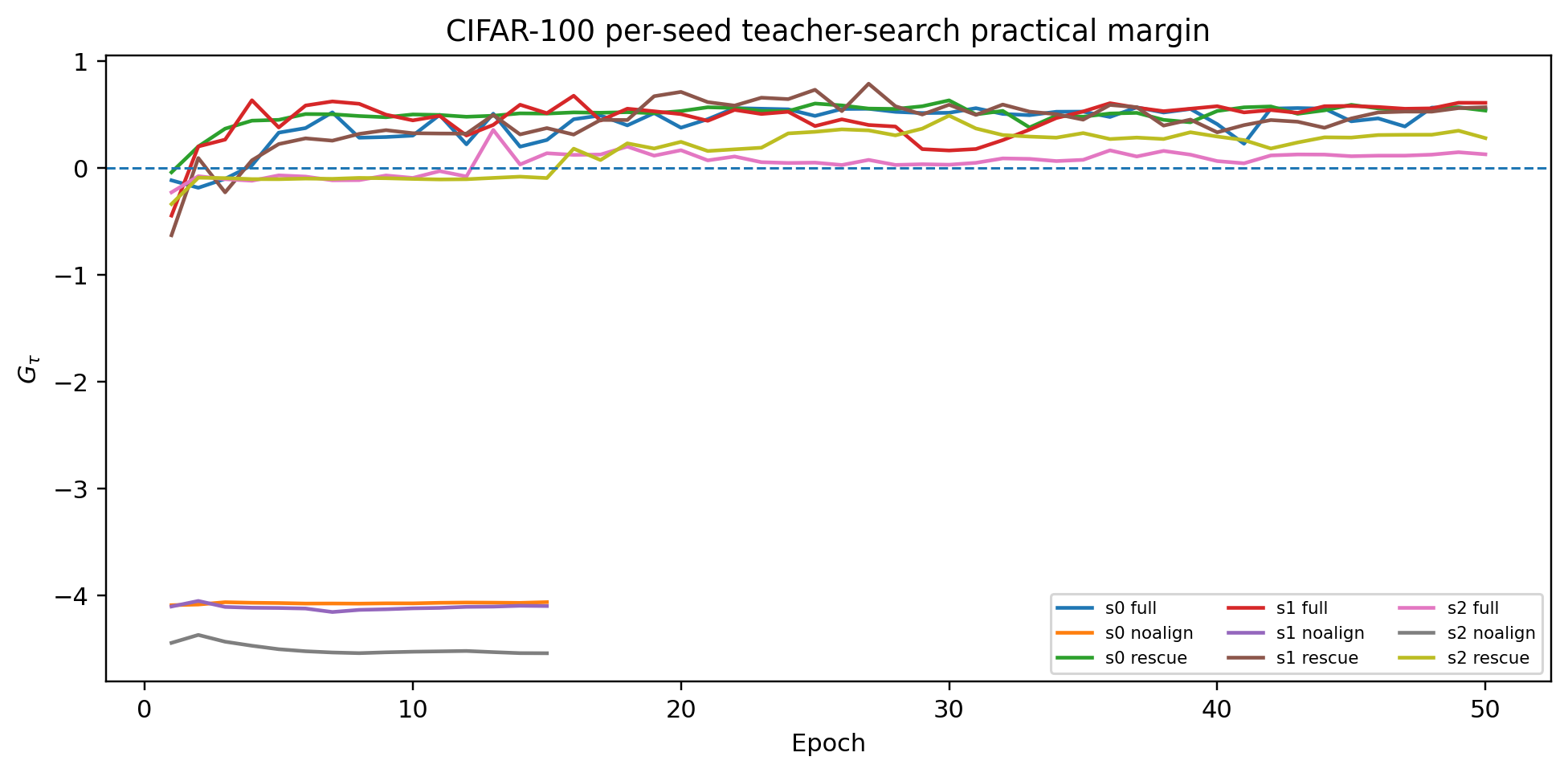}
  \caption{CIFAR-100 three-seed practical-margin trajectories. Full runs remain certificate-positive, no-alignment runs collapse to a strongly negative regime, and rescue restores positive margins in all completed seeds.}
  \label{fig:c100_3seed_gtau}
\end{figure}

\begin{figure}[H]
  \centering
  \includegraphics[width=0.82\linewidth]{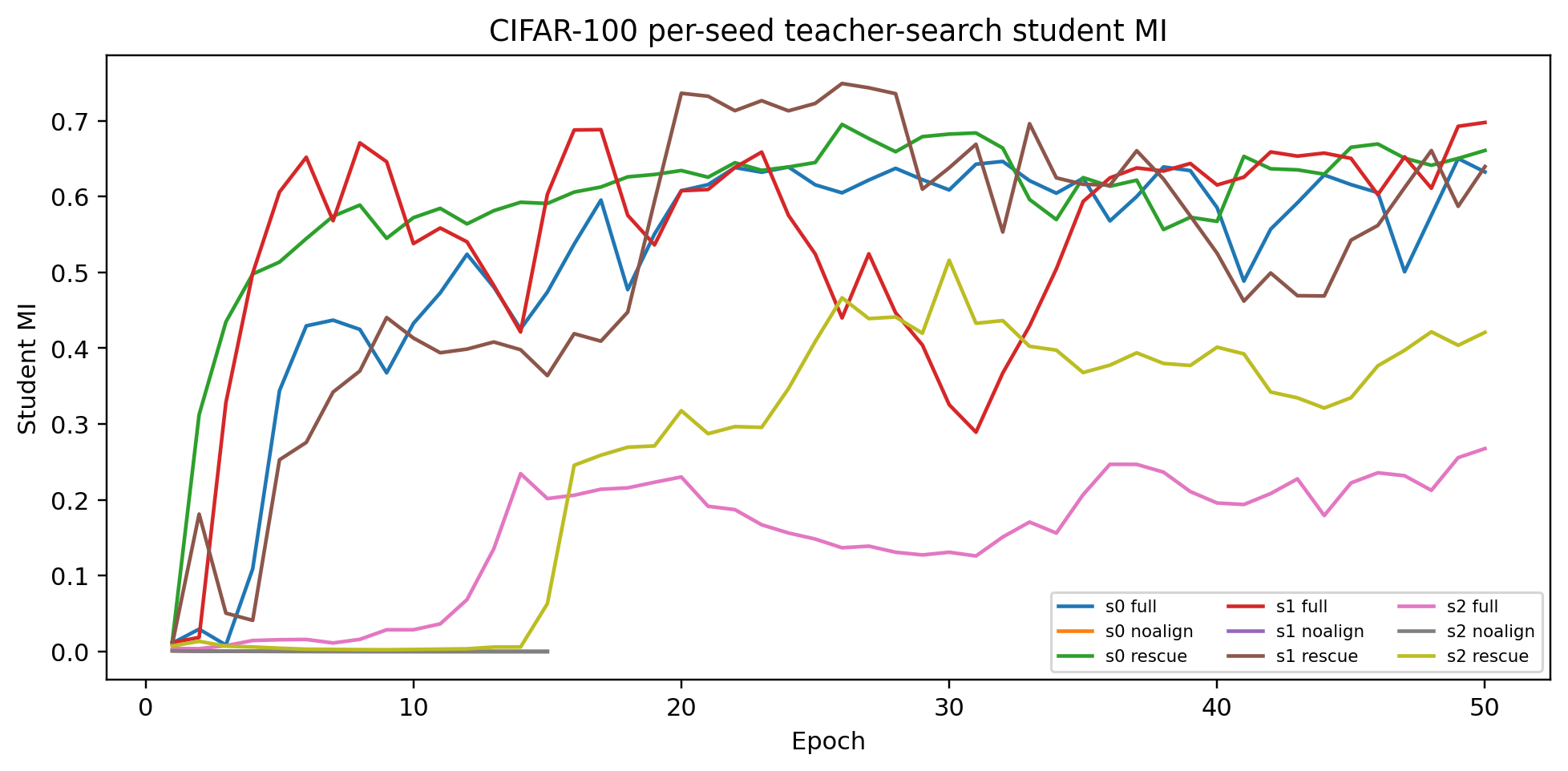}
  \caption{CIFAR-100 three-seed student-MI trajectories. The no-alignment runs approach nearly constant assignments, while full and rescue retain nontrivial student mutual information.}
  \label{fig:c100_3seed_mi}
\end{figure}

\begin{figure}[H]
  \centering
  \includegraphics[width=0.82\linewidth]{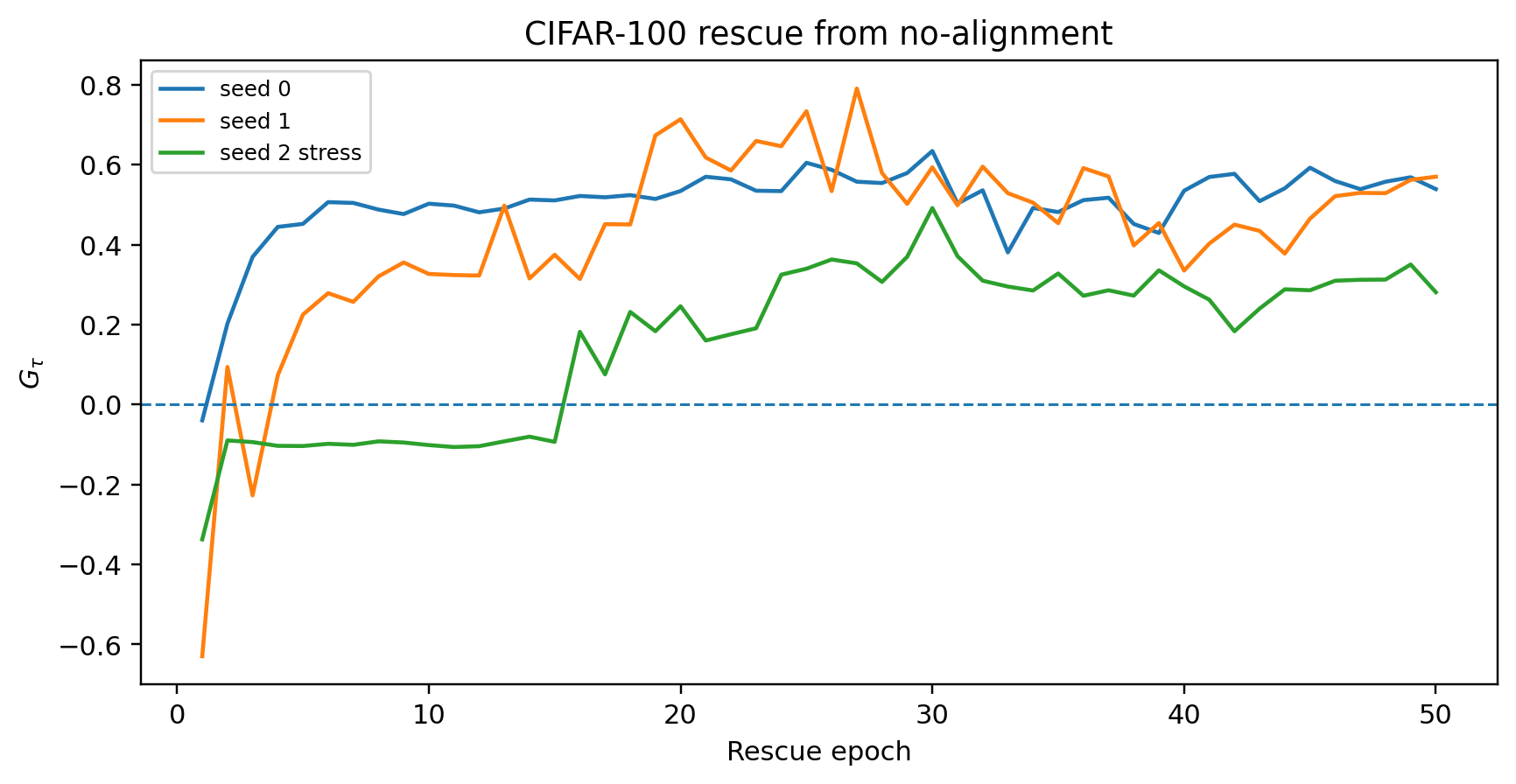}
  \caption{CIFAR-100 rescue practical-margin trajectories under the per-seed teacher-search protocol. Rescue starts from no-alignment collapsed checkpoints and recovers a positive raw-head certificate for all reported seeds.}
  \label{fig:c100_rescue_gtau}
\end{figure}

\begin{figure}[H]
  \centering
  \includegraphics[width=0.82\linewidth]{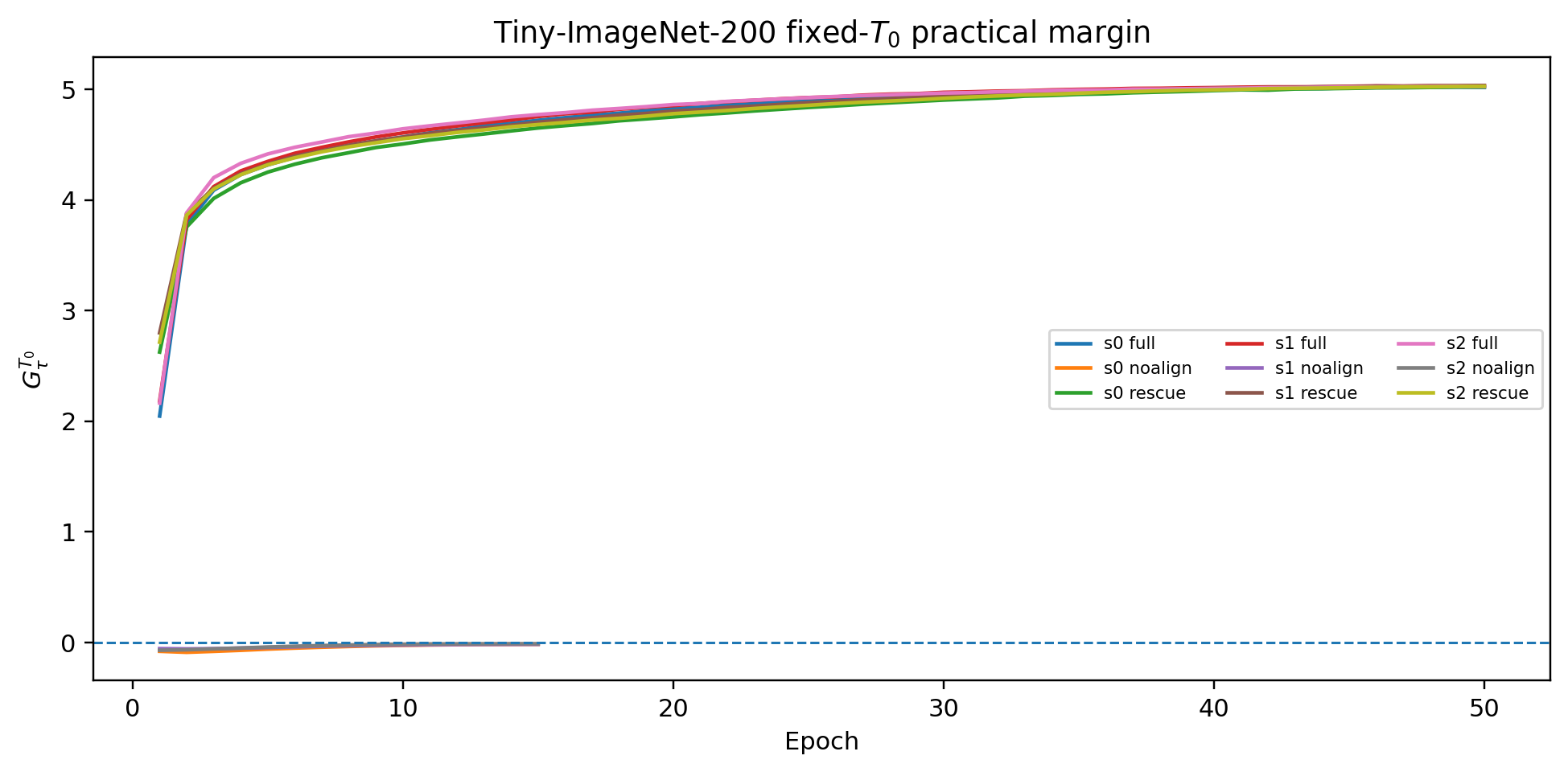}
  \caption{Tiny-ImageNet-200 fixed-$T_0$ three-seed certificate trajectories. Each seed re-searches a teacher before caching $T_0(x)$; full and rescue become strongly positive, while no-alignment remains near the fixed-target constant boundary.}
  \label{fig:tiny_3seed_gtau}
\end{figure}

\begin{figure}[H]
  \centering
  \includegraphics[width=0.82\linewidth]{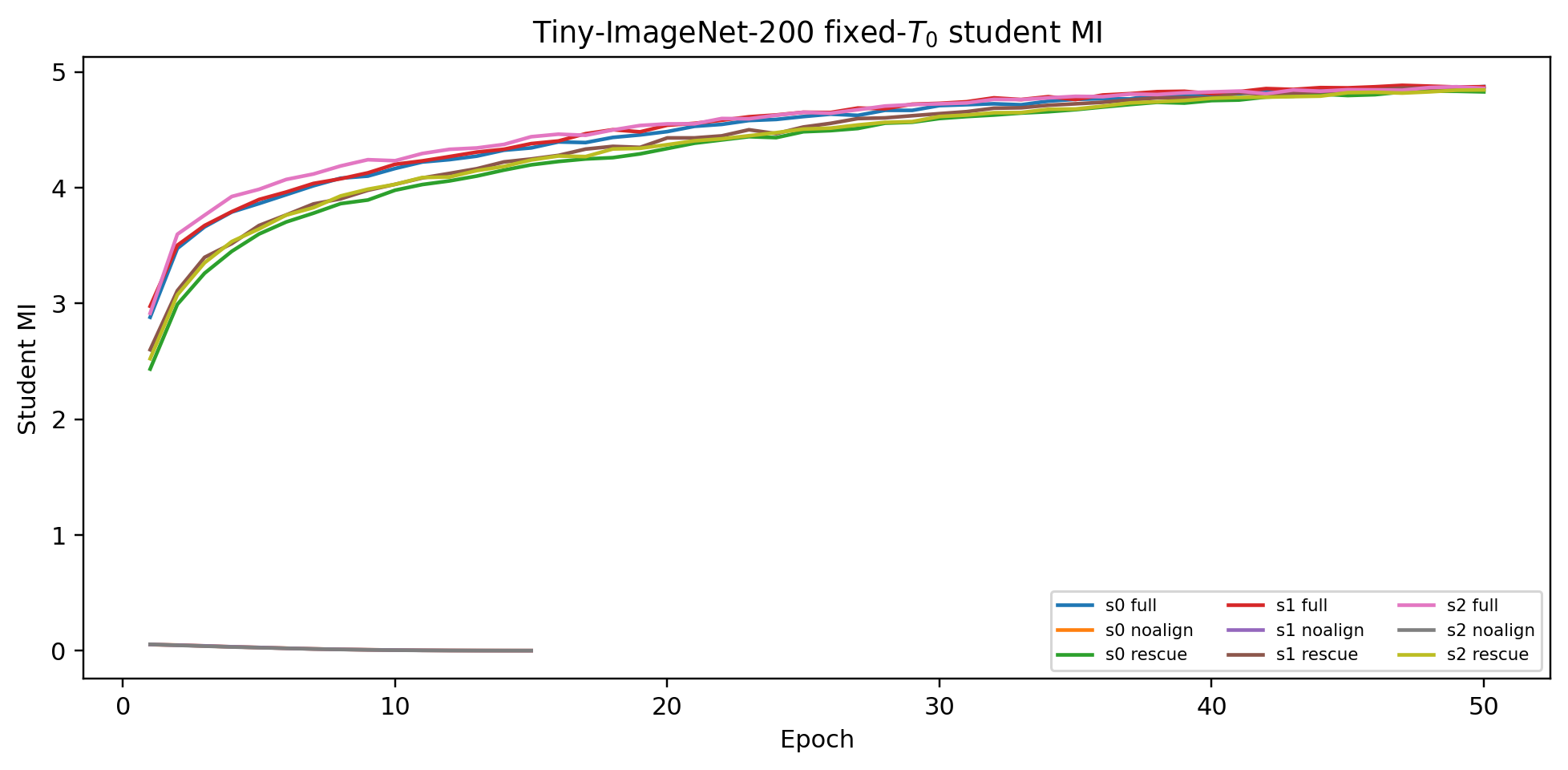}
  \caption{Tiny-ImageNet-200 three-seed student-MI trajectories. Full and rescue maintain high student MI relative to the fixed target, whereas no-alignment collapses to nearly input-independent assignments.}
  \label{fig:tiny_3seed_mi}
\end{figure}

\begin{figure}[H]
  \centering
  \includegraphics[width=0.82\linewidth]{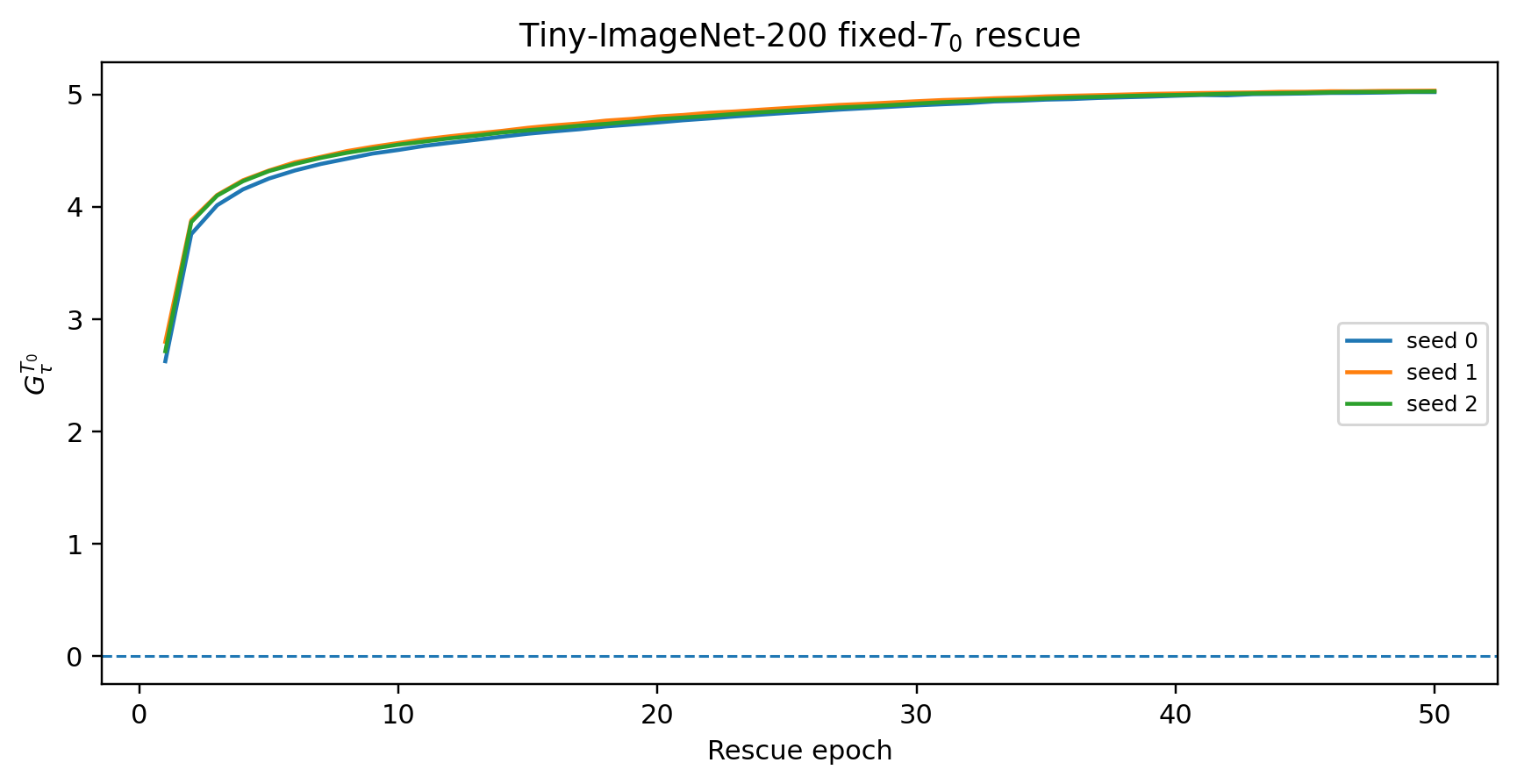}
  \caption{Tiny-ImageNet-200 fixed-$T_0$ rescue certificate trajectories. Reintroducing fixed-target alignment from collapsed no-alignment checkpoints restores large positive fixed-target margins.}
  \label{fig:tiny_rescue_gtau}
\end{figure}

\begin{figure}[H]
  \centering
  \includegraphics[width=0.82\linewidth]{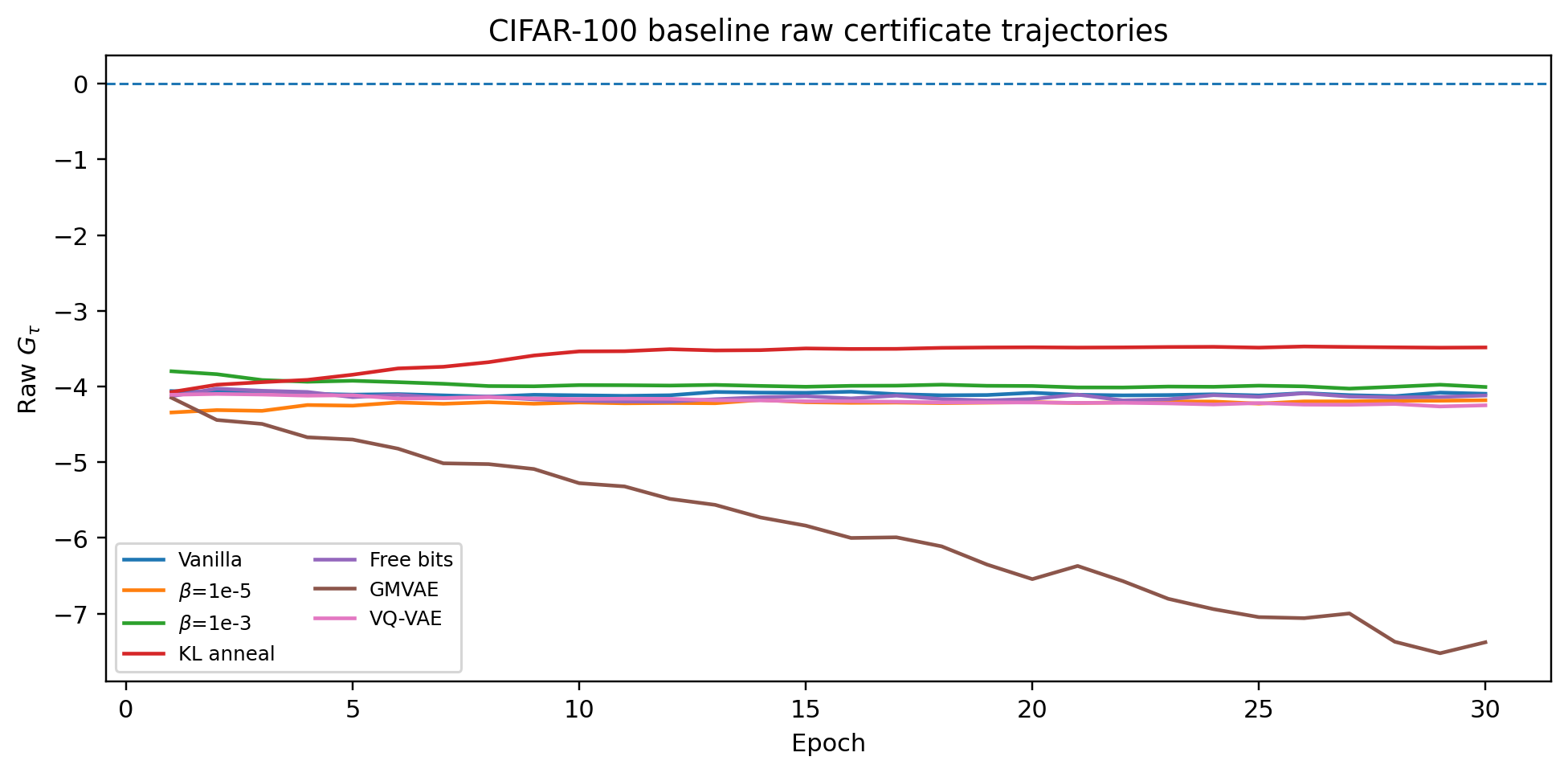}
  \caption{CIFAR-100 baseline raw-margin trajectories. The standard VAE-style baselines remain raw-certificate negative under the fixed teacher, supporting the distinction between reconstruction/active-unit diagnostics and the teacher-relative raw-head certificate.}
  \label{fig:c100_baseline_curves}
\end{figure}

\section{Optimization Dynamics Near Collapse}
\label{app:opt_dynamics_student_rescue}

This appendix analyzes optimization behavior near the constant-collapse boundary and formalizes when rescue can or cannot reactivate useful gradients. The analysis supports the main theorem and the empirical rescue results without adding a separate certificate.

\tightparagraph{Compression versus certified structure.}
The total objective combines reconstruction, KL regularization, and teacher-guided alignment:
\[
\mathcal L
=
\mathcal L_{\mathrm{recon}}
+\beta \mathcal L_{\mathrm{KL}}
+\lambda_{\mathrm{align}} L_{\mathrm{align}}^{\mathrm{raw}}
+\lambda_{\mathrm{bal}} L_{\mathrm{bal}}.
\]
KL regularization induces shrinkage toward a simple prior-compatible latent representation, while the raw alignment term rewards input-dependent latent structure that beats the constant baseline $I_T$. The resulting training dynamics are therefore governed by a competition between compression and certified discriminative structure. Non-collapse arises when the alignment gain outweighs the KL-driven collapse pressure.

\tightparagraph{The role of \texorpdfstring{$\lambda_{\mathrm{align}}$}{lambda\_align}.}
Increasing $\lambda_{\mathrm{align}}$ strengthens the optimization pressure to reduce the raw alignment loss without changing the direction of the alignment gradient itself. Near the collapse boundary, this typically enlarges the practical safety margin
\[
\Gtau = I_T - L_{\mathrm{align}}^{\mathrm{raw}} - \tau
\]
by pushing $L_{\mathrm{align}}^{\mathrm{raw}}$ further below the constant baseline. In this sense, $\lambda_{\mathrm{align}}$ controls the distance to collapse in certificate space rather than in a simple geometric parameter-space sense. When collapse is already deep, however, increasing $\lambda_{\mathrm{align}}$ alone may be insufficient, because the system may already be trapped in a bad basin dominated by KL shrinkage, decoder bypass, or vanishing encoder-side sensitivity.

\tightparagraph{From certificate to harder anti-collapse objectives.}
Once a certified safety margin $\Gtau$ is available, one may also define harder anti-collapse training variants around the certificate boundary. For example, one may consider a barrier-style objective
\[
\mathcal L = \mathcal L_{\mathrm{base}} - \mu \log(\Gtau - m),
\]
which makes the collapse boundary behave like an infinite wall as $\Gtau \to m^+$. We do not pursue this direction in the present paper, because our goal here is to certify and analyze the constant-collapse boundary itself rather than to optimize a new barrier-based training procedure.

\tightparagraph{Near-collapse drift and collapse probability.}
Under a guarded $(\beta,\lambda_{\mathrm{align}})$ schedule and a non-vanishing alignment-gradient condition in the near-collapse strip, the practical margin $\Gtau$ admits positive drift away from the collapse boundary. Consequently, the probability of entering the collapse set can be made exponentially small in the initial safety margin and inverse noise level. This statement is a local dynamical claim near the practical boundary, not a global guarantee against all forms of posterior collapse.

\tightparagraph{Zero total gradient versus zero alignment gradient.}
A zero total gradient does not imply a zero alignment gradient. If the stationary point arises from cancellation between the VAE and alignment gradients, rather than from a vanishing alignment gradient itself, then changing $\lambda_{\mathrm{align}}$ immediately changes the total descent direction and may trigger rescue. The truly difficult case is stronger: all relevant parameter gradients vanish and the latent representation is collapsed.

\tightparagraph{Two training tracks and their combinations.}
From the optimization viewpoint, training can be read as the interaction of two coupled tracks. The standard VAE track is driven by reconstruction and KL regularization and tends to compress the latent representation, while the student/alignment track is driven by $L_{\mathrm{align}}$ and related balance terms and rewards input-dependent structure that beats the constant baseline. In practice these tracks may be run jointly, alternated in either order, or organized into a staged protocol such as student-only rescue followed by reactivated joint training. Near the collapse boundary, the ordering and the relative scale of $(\beta,\lambda_{\mathrm{align}})$ matter because the two tracks may either cancel each other or reinforce a drift back toward the safe region.

\section{Guarded Dynamics for Prevention and Rescue}
\label{app:guarded-dynamics}
The certificate can also be viewed as a training-time guard. This argument is not needed for the constant-student identity itself; it only gives a sufficient optimization condition for maintaining or recovering the certified margin.

Let
\[
L_a(\theta)=L^{raw}_{\mathrm{align}}(\theta)
=\mathbb E_x\KL(T_x\|S^{raw}_\theta(x)),
\qquad
G_\tau(\theta)=I_T(X;T)-L_a(\theta)-\tau .
\]
Write the training objective as
\[
\mathcal L(\theta)=\mathcal L_0(\theta)+\lambda(t)L_a(\theta),
\]
where \(\mathcal L_0\) contains reconstruction, latent KL, balance, and other non-alignment terms. Under the continuous-time gradient-flow idealization,
\[
\dot\theta
=
-\nabla\mathcal L_0(\theta)
-\lambda(t)\nabla L_a(\theta).
\]
Since \(G_\tau=I_T-L_a-\tau\), we have
\[
\dot G_\tau
=
-\langle\nabla L_a,\dot\theta\rangle
=
\langle\nabla L_a,\nabla\mathcal L_0\rangle
+
\lambda(t)\|\nabla L_a\|^2 .
\]

This gives a sufficient boundary condition. Suppose that, near the boundary \(G_\tau=0\), the effective alignment gradient is bounded away from zero,
\[
\|\nabla L_a(\theta)\|^2\ge m^2>0,
\]
and the drift from the remaining objective is bounded below,
\[
\langle\nabla L_a,\nabla\mathcal L_0\rangle\ge -b .
\]
Then any alignment weight satisfying
\[
\lambda(t)>b/m^2
\]
makes
\[
\dot G_\tau>0
\]
on the boundary. In this continuous-time approximation, the certified region
\[
\mathcal C_\tau=\{\theta:G_\tau(\theta)>0\}
\]
is forward invariant whenever these inequalities hold. Thus, if training starts with \(G_\tau>0\), the trajectory cannot cross into the uncertified side through such a boundary point.

The same calculation explains rescue. At a constant student \(S_x\equiv\bar T\), the KL-alignment gradient with respect to the student logits \(a_x\) is
\[
\nabla_{a_x}L_a=S_x-T_x=\bar T-T_x .
\]
Therefore the descent direction is
\[
-\nabla_{a_x}L_a=T_x-\bar T,
\]
which is input-dependent whenever the teacher is nonconstant. If the encoder and raw head retain a nonzero local response in this centered teacher direction, re-enabling alignment produces an input-dependent update and can move the raw witness away from the constant-student region.

A simple sufficient local-response condition is that there exists a small parameter perturbation \(\delta\theta\) whose induced logit change
\[
\delta a_x=J_\theta(x)\delta\theta
\]
is positively correlated with the centered teacher variation:
\[
\mathbb E_x
\left\langle
T_x-\bar T,\,
J_\theta(x)\delta\theta
\right\rangle
>0 .
\]
This condition can fail if the encoder has become fully input-independent, if the raw head has no useful sensitivity, or if the teacher variation is too weak to provide a usable direction. In such cases, increasing the alignment weight alone may not rescue the model.

The practical rescue protocol is therefore student-first in the following sense: once \(G_\tau\le 0\), training temporarily makes the raw alignment term dominant before returning to the full objective. Equivalently, one can view the schedule as
\[
\lambda(t)=
\begin{cases}
\lambda_{\mathrm{base}}, & G_\tau>\delta,\\
\lambda_{\mathrm{guard}}, & 0<G_\tau\le \delta,\\
\lambda_{\mathrm{rescue}}, & G_\tau\le 0,
\end{cases}
\qquad
\lambda_{\mathrm{rescue}}\ge \lambda_{\mathrm{guard}}\ge \lambda_{\mathrm{base}} .
\]
The guard increases alignment pressure before the raw witness crosses the constant-student boundary. Rescue makes the alignment term dominant after the boundary has been crossed, provided that the encoder and raw head still have enough local sensitivity to respond to the teacher signal.

This is a sufficient guarded-dynamics condition under a continuous-time gradient-flow idealization, not a global guarantee for all collapsed states. The theorem-level certificate remains the static implication \(G_\tau>0\Rightarrow\) nonconstant raw witness. The guarded-dynamics argument explains when training can maintain that certificate, and when reintroducing alignment can recover it from a collapsed checkpoint. The empirical rescue results show that the tested collapsed checkpoints remain inside such a rescue basin.

\tightparagraph{Modularity with respect to VAE variants.}
The teacher--student mechanism is modular with respect to the underlying VAE objective. It can be added to a standard VAE or to many VAE variants by augmenting the original training loss with a raw latent-only alignment term,
\[
\mathcal L
=
\mathcal L_{\mathrm{base\;VAE}}
+
\lambda_{\mathrm{align}}L^{raw}_{\mathrm{align}}
+
\lambda_{\mathrm{bal}}L_{\mathrm{bal}}.
\]
In this sense, the certificate is orthogonal to common architectural or objective-level anti-collapse techniques, such as KL schedules, free bits, decoder restrictions, or alternative latent structures. This orthogonality is modular rather than dynamical: the alignment term still changes the optimization trajectory through the encoder and the raw head. Its role is to provide a teacher-relative collapse boundary and a guard signal, not to replace the base VAE objective.

\tightparagraph{Compatibility with quality-oriented regularizers.}
The certificate is compatible with additional regularizers that target representation quality, reconstruction quality, or partial-collapse control. In general, one may augment the base objective as
\[
\mathcal L
=
\mathcal L_{\mathrm{base\;VAE}}
+
\lambda_{\mathrm{align}}L^{raw}_{\mathrm{align}}
+
\lambda_{\mathrm{qual}}L_{\mathrm{qual}},
\]
where \(L_{\mathrm{qual}}\) may include usage-balance penalties, active-unit or mutual-information terms, contrastive losses, reconstruction-side losses, or supervised or self-supervised probing objectives when such signals are available. These terms address properties that are different from the raw certificate: mode usage, linear readability, reconstruction fidelity, or semantic quality. They can therefore be added modularly, provided that the reported certificate remains computed from the raw \(z\)-only witness. This distinction is important: better PSNR, stronger linear-probe accuracy, or improved usage balance can improve the learned representation, but they do not by themselves replace the implication \(G_\tau>0\Rightarrow\) nonconstant raw witness.

\section{Semantic-Control Extensions}
\label{app:finite_semantic_control}

A broader interpretation of the framework is that \(T\) and \(S\) form a finite-dimensional semantic proxy layer for an otherwise high-dimensional generative training process. The teacher \(T(x)\) is a fixed, controllable approximation to the sample distribution in a semantic or self-supervised feature space, while the raw student \(S(z)\) is the latent-path approximation to this reference. This makes a normally invisible training failure visible through \(I_T\), \(L^{raw}_{\mathrm{align}}\), and \(G_\tau\).

This view also clarifies what is, and is not, certified. The present paper certifies the edge \(T(x)\to S(z)\): if the raw z-only witness beats the constant-student threshold, then the matched nonconstant teacher-relative variation must pass through the latent pathway. This does not automatically certify that every property of the reconstruction \(\hat{x}\) is semantically faithful, because a powerful decoder may reconstruct through paths that bypass the raw witness.

A natural extension is therefore to add a reconstruction-side proxy \(T(\hat{x})\). A future triadic protocol could monitor consistency among \(T(x)\), \(S(z)\), and \(T(\hat{x})\), for example through a loss of the form
\[
L_{\mathrm{triad}}
=
\mathrm{KL}\!\left(T(x)\,\|\,S(z)\right)
+
\mathrm{KL}\!\left(S(z)\,\|\,T(\hat{x})\right)
+
\mathrm{KL}\!\left(T(x)\,\|\,T(\hat{x})\right).
\]
Such a protocol would explicitly test whether reconstruction-side semantics remain consistent with both the input-side teacher and the latent-side witness. It could expose decoder paths that reconstruct well while bypassing the raw z-only witness.

The same proxy-layer view also explains why trajectory-level control is plausible. In free logit or free energy space, if
\[
L(u)=\sum_i \mathrm{KL}\!\left(T_i\,\|\,\mathrm{softmax}(u_i)\right)
\]
and the logits follow continuous-time negative gradient flow \(\dot u=-\nabla_u L\), then
\[
\frac{dL}{dt}
=
\langle \nabla_u L,\dot u\rangle
=
-\|\nabla_u L\|^2
\le 0.
\]
Thus \(L^{raw}_{\mathrm{align}}\) is analytically monotone in the unconstrained proxy space. In the actual neural parameterization, this monotonicity is no longer automatic: the encoder \(z_\phi(x)\), decoder, minibatch noise, and shared parameters can all perturb the free-logit descent geometry. This motivates tentative updates, adaptive learning rates, and rollback-style guards that monitor \(G_\tau\), \(L_{\mathrm{align}}\), student MI, linear probes, PSNR, and alignment trajectories after a run enters the safe region \(G_\tau>0\).

In short, \(T\), \(S\), and eventually \(T(\hat{x})\) provide a visible, controllable, finite-dimensional layer on top of a high-dimensional generative process. The present paper establishes the first certified edge of this layer; extending it to reconstruction-side consistency and trajectory-level guarantees is an open direction.

\clearpage
\IfFileExists{./checklist.tex}{\input{./checklist.tex}}{}


\clearpage
{\small
\begin{thebibliography}{99}
\bibitem[Alemi et~al.(2018)]{alemi2018}
Alemi, A., Poole, B., Fischer, I., Dillon, J., Saurous, R., and Murphy, K. Fixing a broken ELBO. \emph{ICML}, 2018.

\bibitem[Bowman et~al.(2016)]{bowman2016}
Bowman, S., Vilnis, L., Vinyals, O., Dai, A., Jozefowicz, R., and Bengio, S. Generating sentences from a continuous space. \emph{CoNLL}, 2016.

\bibitem[Chen et~al.(2017)]{chen2017}
Chen, X., Kingma, D., Salimans, T., Duan, Y., Dhariwal, P., Schulman, J., Sutskever, I., and Abbeel, P. Variational lossy autoencoder. \emph{ICLR}, 2017.

\bibitem[Dilokthanakul et~al.(2016)]{dilokthanakul2016}
Dilokthanakul, N., Mediano, P., Garnelo, M., Lee, M., Salimbeni, H., Arulkumaran, K., and Shanahan, M. Deep unsupervised clustering with Gaussian mixture variational autoencoders. arXiv:1611.02648, 2016.

\bibitem[He et~al.(2019)]{he2019lagging}
He, J., Spokoyny, D., Neubig, G., and Berg-Kirkpatrick, T. Lagging inference networks and posterior collapse in variational autoencoders. \emph{ICLR}, 2019.

\bibitem[Hinton et~al.(2015)]{hinton2015}
Hinton, G., Vinyals, O., and Dean, J. Distilling the knowledge in a neural network. arXiv:1503.02531, 2015.

\bibitem[Kingma and Welling(2014)]{kingma2014}
Kingma, D. and Welling, M. Auto-encoding variational Bayes. \emph{ICLR}, 2014.

\bibitem[Krizhevsky(2009)]{krizhevsky2009cifar}
Krizhevsky, A. Learning multiple layers of features from tiny images. Technical report, University of Toronto, 2009.

\bibitem[Le and Yang(2015)]{le2015tiny}
Le, Y. and Yang, X. Tiny ImageNet visual recognition challenge. CS 231N, 2015.

\bibitem[van den Oord et~al.(2017)]{oord2017}
van den Oord, A., Vinyals, O., and Kavukcuoglu, K. Neural discrete representation learning. \emph{NeurIPS}, 2017.

\bibitem[Zhao et~al.(2019)]{zhao2019}
Zhao, S., Song, J., and Ermon, S. InfoVAE: information maximizing variational autoencoders. \emph{AAAI}, 2019.
\end{thebibliography}}
\end{document}